\crefname{assumption}{assumption}{assumptions}
\crefname{observation}{observation}{observations}
\newtheorem{theorem}{Theorem}
\newtheorem{lemma}[theorem]{Lemma}
\newcommand{\spea}{SP\-EA\-2\xspace}
\newcommand{\nsga}{NS\-GA-II\xspace}
\newcommand{\sigDist}{$\sigma$-distance\xspace}
\newcommand{\sigDists}{$\sigma$-distances\xspace}
\newcommand{\omm}{\textsc{OMM}\xspace}
\newcommand{\R}{\ensuremath{\mathbb{R}}}
\newcommand{\N}{\ensuremath{\mathbb{N}}} 
\newcommand{\E}{\ensuremath{\mathbb{E}}}
\let\originalleft\left
\let\originalright\right
\renewcommand{\left}{\mathopen{}\mathclose\bgroup\originalleft}
\renewcommand{\right}{\aftergroup\egroup\originalright}
\DeclarePairedDelimiter{\floor}{\lfloor}{\rfloor}
\DeclarePairedDelimiter{\ceil}{\lceil}{\rceil}
\title{Proven Approximation Guarantees in Multi-Objective Optimization: \\SPEA2 Beats NSGA-II}
\author{
Yasser Alghouass$^1$\and
Benjamin Doerr$^2$\and
Martin~S. Krejca$^2$\And
Mohammed Lagmah$^1$\\
\affiliations
$^1$École Polytechnique, Institut Polytechnique de Paris\\
$^2$Laboratoire d'Informatique (LIX), CNRS, École Polytechnique, Institut Polytechnique de Paris\\
\emails
\{firstname.lastname\}@polytechnique.edu
}
\begin{document}
{\sloppy

\maketitle

\begin{abstract}
  Together with the NSGA-II and SMS-EMOA, the \emph{strength Pareto evolutionary algorithm 2} (SPEA2) is one of the most prominent dominance-based multi-objective evolutionary algorithms (MOEAs). Different from the NSGA-II, it does not employ the crowding distance (essentially the distance to neighboring solutions) to compare pairwise non-dominating solutions but a complex system of $\sigma$-distances that builds on the distances to all other solutions.
  In this work, we give a first mathematical proof showing that this more complex system of distances can be superior. More specifically, we prove that a simple steady-state SPEA2 can compute optimal approximations of the Pareto front of the OneMinMax benchmark in polynomial time. The best proven guarantee for a comparable variant of the NSGA-II only assures approximation ratios of roughly a factor of two, and both mathematical analyses and experiments indicate that optimal approximations are not found efficiently.
\end{abstract}

\section{Introduction}
\label{sec:introduction}

Many optimization problems in practice consist of several conflicting objectives. One common approach for such problems is to compute a set of solutions witnessing the Pareto front (or a sufficiently diverse subset thereof) and then let a human decision maker select the final solution. For such multi-objective optimization problems, evolutionary algorithms with their population-based nature are an obvious choice, and in fact, such \emph{multi-objective evolutionary algorithms (MOEAs)} are among the most successful algorithms~\cite{CoelloLV07,ZhouQLZSZ11}.

Most research on randomized search heuristics is empirical, and much fewer theoretical works exist~\cite{NeumannW10,AugerD11,Jansen13,ZhouYQ19,DoerrN20}. However, theoretical analyses of MOEAs, usually proving runtime guarantees and from this aiming at a deeper understanding of these algorithms, exist for more than twenty years~\cite{LaumannsTDZ02,Giel03,Thierens03}. This field has made a huge step forward recently with the first mathematical runtime analysis of the \nsga \cite{ZhengLD22,ZhengD23aij}, the most prominent MOEA. This work was quickly followed up by more detailed analyses of the \nsga~\cite{BianQ22,DoerrQ23tec,DoerrQ23LB,DoerrQ23crossover,DangOSS23gecco,CerfDHKW23,DangOS24,DengZLLD24,ZhengD24approx,ZhengD24many,DoerrIK25} and by analyses of other prominent MOEAs… such as the NSGA-III~\cite{WiethegerD23,OprisDNS24,DengZD25,Opris25}, SMS-EMOA~\cite{BianZLQ23,ZhengD24,LiZD25}, and SPEA2 \cite{RenBLQ24}.

Interestingly, these results show very similar performance guarantees for these algorithms (which agree with the results known for the classic (G)SEMO analyzed in the early theoretical works on MOEAs).
Also, \cite{RenBLQ24} provide a mathematical framework allowing to prove comparable runtime bounds for several MOEAs.
The sole outlier so far is the result~\cite{ZhengD24many} showing that the \mbox{\nsga} has enormous difficulties for three or more objectives (see~\cite{DoerrKK24arxiv} for a second such result). This problem of the \nsga can  be resolved with an additional tie-breaker~\cite{DoerrIK25} and then the same performance guarantees as known for the other algorithms hold.

In this work, we detect a second notable performance difference, namely in the ability to approximate the Pareto front. This aspect is understood much less than the time to compute the full Pareto front. The only such result for the algorithms mentioned above is \cite{ZhengD24approx}. In that work, it was argued via theoretical arguments that the classic \nsga, computing the crowding distance for all individuals and then selecting the next population, can have difficulties to approximate the Pareto front as it ignores that every removal of an individual affects the crowding distance of other solutions. However, for the algorithm variant that removes the individuals sequentially and updates the crowding distances after each removal (as suggested already in the little known work \cite{KukkonenD06}), a $2$-approximation result was shown for the optimization of the bi-objective OneMinMax benchmark. This is the only approximation guarantee so far for one of the classic algorithms named above.

\paragraph{Our contribution.}
In this work, we study the approximation ability of the \emph{simple Pareto evolutionary algorithm 2} \cite{ZitzlerLT01} (SPEA2). Like the \nsga, the \spea is a dominance-based algorithm, that is, its first priority is to keep non-dominated solutions. As a tie-breaker for removing solutions, like the \nsga, it regards how close incomparable solutions are and first removes those with other solutions nearby, thus aiming for an evenly spread population. We discuss the precise differences of the distance measures used by the \nsga and \spea later in this work, and remark for now only that the \spea takes into account the distances to all other solutions (in the order from near to far), whereas the crowding distance of the \nsga only accounts for the distances to the two neighboring solutions in each objective.

As we shall show in this work, with its more complex distance measure, the SPEA2 is able to compute much better approximations of the Pareto front. Taking again the OneMinMax benchmark as example, we prove that a simple version of the SPEA2 finds an optimal approximation to the Pareto front in polynomial time, more precisely, expected time $O(\mu^2 \log(\mu) n \log(n))$ (\Cref{thm:spea_run_time_optimal_spread}), where $\mu$ is the population size of the \spea and $n$ is the problem size of the benchmark.
We contrast this result by showing that there are states in the \nsga that are close to being an optimal approximation, but the algorithm takes nonetheless with overwhelming probability a super-polynomial time to get to the optimal approximation (\Cref{thm:nsga-counterexample}).
This shows that the \nsga can struggle immensely to compute an optimal approximation whereas the \spea does not have this problem.

These results give a strong evidence for the hypothesis that the more complex way of measuring the distance between solutions of the \spea is worth the additional complexity and leads to significantly stronger approximation abilities.

\section{Preliminaries}
\label{sec:preliminaries}

Denote the natural numbers by~$\N$ (with~$0$) and the reals by~$\R$.
For all $m, n \in \N$, let $[m .. n] \coloneqq [m, n] \cap \N$ and $[n] \coloneqq [1 .. n]$.

For all $n \in \N_{\geq 1}$, we call $x \in \{0, 1\}^n$ an \emph{individual}.
We call a multi-set of individuals a \emph{population}, and we use standard set operations for populations, which extend naturally to multi-sets.
We consider pseudo-Boolean maximization of bi-objective functions $f\colon \{0, 1\}^n \to \R^2$, which map individuals to \emph{objective values}.
For each $x \in \{0, 1\}^n$ and $i \in [2]$, we denote the objective value~$i$ of~$x$ by $f_i(x)$.

We compare objective values $u, v \in \R$ via the \emph{dominance} partial order.
We say \emph{$u$ weakly dominates~$v$} (written $u \succeq v$) if and only if $u_1 \geq v_1$ and $u_2 \geq v_2$, and~$u$ \emph{strictly} dominates~$v$ if and only if at least one of these inequalities is strict.
If and only if neither $u \succeq v$ nor $v \succeq u$, we say~$u$ and~$v$ are \emph{incomparable}.
Given a bi-objective function, we extend this notion to all individuals by implicitly referring to their objective value.

For a (multi-)set~$P$ of individuals and a bi-objective function~$f$, we say that an individual $x \in P$ is \emph{non-dominated} if and only if there is no $y \in P$ that strictly dominates~$u$.
We call each non-dominated individual of $\{0, 1\}^n$ \emph{Pareto-optimal} and its objective value a \emph{Pareto optimum}.
Last, we call the set of all Pareto optima the \emph{Pareto front (of~$f$)}.

We use traditional set notation for both normal sets (with unique elements) and multi-sets.
The union of multi-sets does not remove any duplicates, and the cardinality of a multi-set accounts for all duplicates.
However, if we apply a function to multi-set, then the result is a normal function, that is, the function values are not duplicated.
Note that populations are multi-sets whereas sets of objective values are normal sets.

\paragraph{The OneMinMax Benchmark.}
The OneMinMax (\omm) benchmark, introduced by \cite{GielL10}, is a bi-objective function that aims at maximizing the number of ones and the number of zeros of an individual.
Formally, for all $x \in \{0, 1\}^n$, we have
\begin{align*}
  \textstyle
  \omm(x) = \Bigl(\sum\nolimits_{i \in [n]} x_i, \sum\nolimits_{i \in [n]} (1 - x_i)\Bigr) .
\end{align*}

Since the objectives are the inverse of each other, no individual strictly dominates another one.
Thus, the Pareto front is the set of all objective values, namely $\{(i, n - i) \mid i \in [0 .. n]\}$, which has a size of $n + 1$.

We call the objective values $(0, n)$ and $(n, 0)$ the \emph{extreme} objective values, since they feature the maximum and minimum possible value in each of their objectives.

\paragraph{Optimal spread.}
We aim at approximating the Pareto front on \omm algorithmically via a population.
To this end, let~$P$ be a population of size $\mu \coloneqq |P| \in [2 .. n]$, and let $\bigl(\omm_1(x)\bigr)_{x \in P} \eqqcolon (v_i)_{i \in [\mu]}$ be the first objective of each individual in~$P$ in ascending order.
Furthermore, assume that $v_1 = 0$ and that $v_\mu = n$, that is, that the extreme objective values are witnessed by~$P$.
We say that~$P$ computes an \emph{optimal spread} on the Pareto front of \omm if and only if for all $i \in [\mu - 1]$, we have $v_{i + 1} - v_i \in \{\floor{\frac{n}{\mu - 1}}, \ceil{\frac{n}{\mu - 1}}\}$.
In other words, the distance between two neighboring objective-values is as close as possible to being equidistant.

In order to more concisely describe an optimal spread, let
\begin{align}
  \label{eq:alpha_and_beta}
   & \textstyle\alpha \coloneqq \floor{\frac{n}{\mu - 1}} \textrm{ and }
  \beta \in [0 .. \mu - 1]                                                 \\
  \notag
   & \textrm{such that } \alpha \beta + (\alpha + 1)(\mu - 1 - \beta) = n,
\end{align}
where we note that~$\beta$ is uniquely determined.

\section{The \spea and NSGA-II Algorithms}
\label{sec:algorithms}

The \emph{strength Pareto evolutionary algorithm~2}~\cite{ZitzlerLT01} (\spea, \Cref{alg:spea2}) and the \emph{non-dominated sorting genetic algorithm~II}~\cite{DebPAM02} (\nsga, \Cref{alg:NSGA}) are both popular population-based multi-objective optimization heuristics.
Since we analyze the \spea in more depth in this paper, we explain it in more detail below in \Cref{sec:algorithms:spea}, noting that both algorithms act identically in broad parts on the \omm benchmark.
We outline the most important parts of the \nsga in \Cref{sec:algorithms:nsga}, referring for more details to the original paper by \cite{DebPAM02}.

\subsection{The \spea}
\label{sec:algorithms:spea}

We regard an equivalent formulation of the \spea that was proposed in the extension of~\cite{WiethegerD24} (currently only on arXiv). The \spea (\Cref{alg:spea2}) maximizes a given bi-objective problem $f\colon \{0, 1\}^n \to \R^2$ by iteratively refining  a multi-set of individuals (the \emph{parent population}) of given population size $\mu \in \N_{\geq 1}$.
In each iteration, the algorithm generates a user-defined amount $\lambda \in \N_{\geq 1}$ of new individuals (the \emph{offspring population}) via a process called \emph{mutation}.
Afterward, the \spea selects the~$\mu$ most promising individuals among the parent and the offspring population, to be used in the next iteration.
The basis for this new population are all non-dominated individuals.
If this number is greater than~$\mu$, then the algorithm removes individuals based on a clustering technique called \emph{\sigDists}.
This is the predominant case in our analysis in \Cref{sec:theory-spea}, as the \omm benchmark only features non-dominated objective values.

If, instead, the number of non-dominated individuals is less than~$\mu$, then the \spea adds dominated individuals, based on their \emph{strength-based indicator}.
Last, if the number of non-dominated individuals is exactly~$\mu$, the algorithm proceeds immediately with the next iteration.

\paragraph{Mutation.}
We consider two types of mutation, each of which is given a \emph{parent} $x \in \{0, 1\}^n$ and generates a \emph{new} individual $y \in \{0, 1\}^n$ (the \emph{offspring}).

\emph{1-bit mutation} copies~$x$ and flips exactly one bit at position $i \in [n]$ chosen uniformly at random.
That is, we have $y_i = 1 - x_i$, and for all $j \in [n] \smallsetminus \{i\}$, we have $y_i = x_i$.

\emph{Standard bit mutation} copies~$x$ and flips each position independently with probability~$\frac{1}{n}$.
That is, for all $i \in [n]$, we have $\Pr[y_i = 1 - x_i] = \frac{1}{n}$ and $\Pr[y_i = x_i] = 1 - \frac{1}{n}$, independent from all other random choices.

\paragraph{The \sigDists.}
Given a population~$P$ of size $s \in \N_{\geq 1}$ of non-dominated individuals, the \sigDists are based on a function $\sigma\colon \{0, 1\}^n \to \R^{s - 1}$ that assigns each individual $x \in P$ a vector that contains the Euclidean distances of the objective value of~$x$ to those of all other individuals in~$P$ in ascending order (breaking ties arbitrarily), which we call the \sigDist of~$x$.
We follow the convention of \cite{ZitzlerLT01} and denote for all $i \in [s - 1]$ the distance of~$x$ to its $i$-closest neighbor by~$\sigma^i_x$.

In \cref{line:sigDists}, individuals are removed with respect to the lexicographic ascending order of their \sigDist, breaking ties uniformly at random, and removing the individuals at the beginning of this order.
In other words, individuals that are too close to other individuals are removed first.
We remark that the \sigDists are updated after each removed individual.

We note that for \omm, we always have $s = \mu + \lambda$, as all individuals are Pareto-optimal.
Moreover, we note that for the removal based on \sigDists, only the relative order matters.
Since \omm has complementary objectives, it is thus sufficient to consider a distance based on a single objective that has the same monotonicty as the original \sigDists.
We focus in our analysis on the first objective, that is, the number of~$1$s in an individual, for the \sigDists.

\begin{algorithm2e}[t]
  \caption{
    The strength Pareto evolutionary algorithm~2 \protect\cite{ZitzlerLT01} (\spea) with parent population size $\mu \in \N_{\geq 1}$, and offspring population size $\lambda \in \N_{\geq 1}$, maximizing a given bi-objective function $f\colon \{0, 1\}^n \to \R^2$.
  }
  \label{alg:spea2}
  $t \gets 0$\;
  $P_t \gets \lambda$ independent samples from $\{0, 1\}^n$ with replacement, chosen uniformly at random (u.a.r.)\;
  \While{termination criterion not met}{
  $Q_t \gets \emptyset$\;
  \For{$i \in [\lambda]$}{
    $x \gets$ individual in~$P_t$ chosen u.a.r.\;
    $y \gets$ mutate~$x$ (see \Cref{sec:algorithms})\;
    $Q_t \gets Q_t \cup \{y\}$\;
  }
  $P_{t + 1} \gets$ non-dominated individuals in $P_t \cup Q_t$\;
  \If{$|P_{t + 1}| > \mu$}{
  iteratively remove an individual in~$P_{t + 1}$ with the smallest \sigDist until $|P_{t + 1}| = \mu$\;\label{line:sigDists}
  }
  \ElseIf{$|P_{t + 1}| < \mu$}{\label{line:fillUp}
  iteratively add an individual from $(P_t \cup Q_t) \smallsetminus P_{t + 1}$ to~$P_{t + 1}$ with the smallest strength-based indicator until $|P_{t + 1}| = \mu$\;
  }
  $t \gets t + 1$\;
  }
\end{algorithm2e}

\begin{algorithm2e}[t]
  \caption{
    The non-dominated sorting genetic algorithm~II \protect\cite{DebPAM02} (\nsga) with population size $N\in\N_{\geq 1}$, maximizing a given bi-objective function.
  }
  \label{alg:NSGA}
  $t \gets 0$\;
  $P_t \gets N$ independent samples of $\{0, 1\}^n$ with replacement, each chosen u.a.r.\;
  \While{termination criterion not met}{
  $Q_t \gets$ offspring population of~$P_t$ of size~$N$\;
  $R_t \gets P_t \cup Q_t$\;
  $(F_j)_{j \in [r]} \gets$ partition of~$R_t$ w.r.t. non-dom. ranks\;
  $j^* \gets$ critical rank of $(F_j)_{j \in [r]}$\;
  $P_{t + 1} \gets \bigcup_{j \in [j^*]} F_j$\;
  \If{$|P_{t + 1}| > N$}{%
  iteratively remove an individual in~$P_{t + 1}$ from~$F_{j^*}$ with the smallest crowding distance in~$F_{j^*}$ until $|P_{t + 1}| = N$\;
  }
  $t \gets t + 1$\;
  }
\end{algorithm2e}

\paragraph{Strength-based indicator.}
We note that the case in which the \spea relies on the strength-based indicator never occurs on \omm, as all individuals in \omm are Pareto-optimal.
Hence, we do not explain this operation in detail but refer to the original work by \cite{ZitzlerLT01} instead.
Roughly, the strength-based indicator assigns each individual~$x$ in the combined parent and offspring population a natural number that is the sum over all individuals~$y$ that strictly dominate~$x$, adding the number of individuals that~$y$ weakly dominates.

\paragraph{Steady-state variant.}
If $\lambda = 1$, we say that the \spea is \emph{steady-state}.
When optimizing \omm, this means that a single individual in \cref{line:sigDists} is removed, as all individuals are Pareto-optimal.
The case in \cref{line:fillUp} is never executed.

\subsection{The \nsga}
\label{sec:algorithms:nsga}

Like the \spea, the \nsga (\Cref{alg:NSGA}) maximizes a given bi-objective function $f\colon \{0, 1\}^n \to \R^2$ by maintaining a parent population of $N \in \N_{\geq 1}$ individuals, from which it generates~$N$ offspring each iteration, using a mutation operator.
Afterward, it selects individuals greedily based on the number of individuals by which they are strictly dominated, starting with the non-dominated individuals.
During this phase, all individuals with an identical number are selected.
The smallest number where this leads to selecting at least~$N$ individuals in total is known as the \emph{critical rank}.
For \omm, this means that the entire combined parent and offspring population (of size~$2N$) is kept, as all individuals are not strictly dominated, thus all land in the critical rank.

Afterward, the algorithm removes individuals sequentially from the critical rank based on their \emph{crowding distance}, until~$N$ individuals remain, breaking ties uniformly at random.

\paragraph{Crowding distances.}
Given a population~$R$, the crowding distance of an individual $x \in R$ is the sum of its crowding distance \emph{per objective}.
For each objective $i \in [2]$, the crowding distance of~$x$ is based on the (normalized) distance to its two closest neighbors in objective~$i$.
That is, let $S_i = (y_i)_{i \in [|R|]}$ denote~$R$ in ascending order of objective~$i$.
If~$x$ is an extreme solution, that is, if $x = y_1$ or $x = y_{|R|}$, then its crowding distance for objective~$i$ is plus infinity.
Otherwise, if there is an $i \in [2 .. |R| - 1]$ such that $x = y_i$, then the crowding distance of~$x$ for objective~$i$ is $\bigl(f(y_{i + 1}) - f(y_{i - 1})\bigr) / \bigl(f(y_{|R|}) - f(y_1)\bigr)$.

For \omm, similar to the \spea, it is sufficient to only consider the crowding distance in the first objective, as the two objectives are complementary to each other (and the sorting for each objective just results in an inverse order, up to how ties are handled).

We remark that, different from the \sigDists in the \spea, the crowding distance is \emph{not} recomputed each time an individual is removed, which is a general flaw in the \nsga, as proven by \cite{ZhengD24approx}.
In this article, we use a stable sorting algorithm for the crowding distance computation, which is common practice.

\paragraph{Steady-state variant.}
If the \nsga only produces one offspring each iteration (regardless of~$N$), we say that it is \emph{steady-state}.
This means that in each iteration, exactly one individual is removed.
Since this effectively recomputes the crowding distance after each removal of an individual, this algorithm variant does not have the problems associated with the classic \nsga.

\section{The \spea Computes an Optimal Spread Efficiently}
\label{sec:theory-spea}

Our main result is \Cref{thm:spea_run_time_optimal_spread}, which shows that the steady-state \spea with 1-bit mutation and $\mu \leq \frac{n}{3}$ computes an optimal spread on \omm after $O(\mu^2 n \log(\mu) \log(n))$ expected function evaluations\footnote{This is asymptotically equal to the number of iterations, as each iteration generates exactly one offspring in the steady-state variant.}.
This bound is slower by a factor of $\mu \log \mu$ than the time required to find the extreme objective values (\Cref{thm:time_until_extreme_values}), which may be an artifact of our analysis.

\begin{restatable}{theorem}{speaRunTimeOptimalSpread}
  \label{thm:spea_run_time_optimal_spread}
  Consider the steady-state \spea optimizing \omm with 1-bit mutation and with $\mu \leq \frac{n}{3}$.
  Then the expected number of objective function evaluations for computing an optimal spread is $O(\mu^2 n \log(\mu) \log(n))$.
\end{restatable}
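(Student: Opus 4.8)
The plan is to split the run into two phases and bound each separately. In the first phase, the algorithm must discover the two extreme objective values $(0,n)$ and $(n,0)$; for this I would simply invoke \Cref{thm:time_until_extreme_values}, which already supplies the bound $O(\mu n \log n)$, i.e.\ a factor $\mu\log\mu$ below the claimed runtime. The substantial second phase begins once both extremes are present and must reach an optimal spread while never losing them. The first supporting lemma I would prove is therefore that \emph{the extreme values, once witnessed, are never lost}. Representing the population by its sorted first-objective values $0 = v_1 \le \dots \le v_\mu = n$, an extreme individual has neighbors only on one side, so its \sigDist is lexicographically larger than that of its inner neighbor whenever that neighbor sits in a comparably tight gap; hence a \emph{unique} extreme is never the most crowded individual. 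Since a steady-state step adds and removes exactly one individual, and the offspring cannot delete an existing extreme, at least one witness of each extreme survives every iteration.

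For the second phase I would track the gap vector $(g_i)_{i\in[\mu-1]}$ with $g_i = v_{i+1}-v_i$ and $\sum_i g_i = n$, and measure progress by a convexity potential such as $\Phi = \sum_{i\in[\mu-1]} g_i^2$, which is minimized exactly at the optimal spread described by $\alpha$ and $\beta$ in \eqref{eq:alpha_and_beta} (by strict convexity, $\Phi$ attains its minimum iff every gap lies in $\{\alpha,\alpha+1\}$). The key structural observation is that a single 1-bit mutation moves a parent $v_j$ to $v_j\pm 1$, shifting one unit of mass between the two gaps adjacent to $v_j$; when the subsequent \sigDist removal deletes the \emph{parent} rather than the offspring, the net effect is to transfer one unit from the larger to the smaller of these two adjacent gaps, which decreases $\Phi$. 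I would prove a removal lemma making this precise and then count: transforming an arbitrary configuration (with extremes fixed) into optimal spread requires at most $O(\mu n)$ adjacent unit transfers, and each iteration realizes a useful transfer with probability $\Omega(1/\mu)$ (choosing, among $\mu$ parents, one adjacent to an unbalanced pair of gaps, with the mutation direction and the removal then pointing the right way). A drift argument on $\Phi$ would then yield $O(\mu^2 n)$ steps up to logarithmic corrections, with the $\log n$ factor stemming from driving $\Phi$ down from its initial value $O(n^2)$ and the $\log\mu$ factor from a coupon-collector bound over the $\Theta(\mu)$ gaps that must each be corrected.

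The hard part will be the removal lemma and the associated drift in full generality. In the clean case---distinct positions and two adjacent gaps differing by at least $2$---the lexicographic comparison of the parent vector $(1,\min(g_{j-1},g_j),\dots)$ with the offspring vector $(1,\min(g_{j-1}+1,g_j-1),\dots)$ decides immediately that the parent is removed, giving a strict decrease of $\Phi$. The genuine difficulty is the \emph{plateau} configurations, such as a staircase of gaps $g,g+1,g+2,\dots$ in which every pair of adjacent gaps differs by exactly $1$: here the second \sigDist entries tie, and the outcome is governed by the \emph{deeper} entries of the sorted distance vectors, that is, by the global spread of the whole population. This is exactly the regime in which the \sigDist of the \spea carries strictly more information than the purely local crowding distance of the \nsga, and I expect it to be the crucial place where the \spea makes progress that the \nsga cannot; I would handle it either with a refined lemma showing that the deeper \sigDist comparison still favors removing the individual on the short side, or with a secondary potential bounding the number of $\Phi$-neutral ``set-up'' moves before a strict decrease, which is also where the logarithmic factors enter. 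Finally, I must rule out the accumulation of duplicates (they carry a distance-$0$ entry and are thus removed first) and re-confirm, across all these configurations, that a unique extreme is never the lexicographic minimum, which is what couples the two phases; the assumption $\mu \le \frac{n}{3}$ enters here by forcing $\alpha \ge 2$, so that in a balanced region every nearest-neighbor distance strictly exceeds the distance $1$ created by a fresh offspring.
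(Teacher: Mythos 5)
Your phase decomposition (find the extremes via \Cref{thm:time_until_extreme_values}, then converge to the optimal spread while never losing them) matches the paper, and you correctly identify the crux: configurations in which every pair of adjacent gaps differs by at most one, so that no single adjacent unit transfer strictly decreases your potential $\Phi=\sum_i g_i^2$. But the proposal does not resolve this case---it only names two possible ways out (``a refined lemma'' on the deeper $\sigma$-entries, or ``a secondary potential'')---and this unresolved case is exactly where essentially all of the paper's technical work lies. The paper does not use a convexity potential at all. It tracks the lexicographic pair $(-X_t,N_t)$, where $X_t$ is the minimum gap length and $N_t$ the number of minimum gaps, proves this pair is non-increasing (\Cref{lem:increasing_smallest_interval,lem:increasing_minimum_number}), and then shows that a minimum-length gap performs a biased walk through the staircase of gaps of length $X_t+1$: it migrates to a border of $[0..n]$ and stays there (\Cref{lem:small_intervals_stay_at_the_borders}), after which the number of intervening $(X_t+1)$-gaps between a minimum gap and a gap of length at least $X_t+2$ is non-increasing and shrinks with probability proportional to $(X_t+1)$ times that number divided by $\mu n$ (\Cref{lem:a_sets,lem:b_sets,lem:c-sets}). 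The $\log\mu$ factor comes from the harmonic sum over that distance, and the $\log n$ factor from summing $1/X_t$ over $X_t\in[1..\alpha]$; your attributions of these logarithms (driving $\Phi$ down from $O(n^2)$; a coupon collector over the gaps) do not correspond to an argument that is actually carried out.

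Two further concrete problems. First, the per-iteration success probability $\Omega(1/\mu)$ is too optimistic: with 1-bit mutation, moving a specific parent in a specific direction has probability $k/(\mu n)$ for a $k$ that can be constant, so a single useful transfer can cost $\Theta(\mu n)$ iterations rather than $O(\mu)$; combined with your $O(\mu n)$ transfer count this overshoots the claimed bound, and recovering it requires exactly the amortization over $X_t$ and over the walk distance that the paper performs. Second, the reduction of the removal step to ``parent or offspring is deleted'' (your clean case) is only valid once $X_t\geq 2$ (\Cref{lem:easyRemoval}); gaps of length exactly $1$ are not duplicates and are not removed first, so a separate argument is needed to leave the regime $X_t=1$ (\Cref{lem:minimum_interval_at_least_two}, costing $O(\mu^2 n)$ iterations, and the actual place where $\mu\le\frac{n}{3}$, i.e.\ $\alpha\ge 3$, is used). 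Your proposal treats $\mu\le\frac{n}{3}$ as ensuring from the outset that every nearest-neighbor distance exceeds~$1$, which holds only near the optimal spread, not for the intermediate populations the algorithm actually traverses.
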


In the following, we start with a high-level overview of the proof structure.
Afterward, we provide more details, until we prove \Cref{thm:spea_run_time_optimal_spread}.

\subsection{High-Level Proof Outline for the \spea}
\label{sec:theory-spea:proofOutline}

Due to all individuals being Pareto-optimal for \omm, our analysis only concerns the case that individuals are removed due to \cref{line:sigDists} in \Cref{alg:spea2}.
For most of our analysis, we consider the steady-state variant, which implies that exactly one individual is removed.
In order to simplify notation, we only consider the first objective when comparing \sigDists, which is feasible for \omm due to the objectives being complementary, as we also mention in the section on \sigDists in \Cref{sec:algorithms}.

\paragraph{No duplicate objective values.}
First, we show that the \spea quickly reaches a state in which all individuals in the parent population have distinct objective values (\Cref{thm:time_to_distinct_archive}), which is possible because the population size~$\mu$ is strictly smaller than the size $n + 1$ of the Pareto front.
Moreover, we prove that the parent population quickly includes the extreme objective values and never loses them (\Cref{thm:time_until_extreme_values}).
Once in such a state, we change perspectives and no longer consider individuals but instead the empty intervals between the first objective values of all individuals in the parent population.
From this perspective, an optimal spread is computed once all intervals are of size $\floor{\frac{n}{\mu - 1}}$ or $\ceil{\frac{n}{\mu - 1}}$.

\paragraph{Useful invariants.}
We continue by proving that the size of all minimum length (empty) intervals (of objective values) never decreases during the run of the algorithm (\Cref{lem:increasing_smallest_interval}).
In addition, we show that the number of intervals of minimum length also never decreases (\Cref{lem:increasing_minimum_number}), which in itself is already a strong property of the \sigDists of the \spea.
We show analogously that the size of all maximum-length intervals (\Cref{lem:increasing_minimum_number}) never increases.
These statements are the foundation of our analysis, as they provide well-defined states of the algorithm from which it can only improve.

Afterward, we show that once the size of all minimum length intervals is at least~$2$, \cref{line:sigDists} reduces to the case that either the offspring or its parent is removed (\Cref{lem:easyRemoval}).
This drastically decreases the complexity of the cases to consider in the remaining analysis.
In addition, we show in \Cref{lem:small_intervals_stay_at_the_borders} that intervals of minimum length remain at the borders of the interval $[0 .. n]$ (until the minimum is increased).
This helps us locate such intervals later in the analysis.

Since the \spea features a lot of useful properties once the minimum length interval has size at least~$2$, we prove in \Cref{lem:minimum_interval_at_least_two} that such a state is reached quickly.
From thereon, we take a more detailed look about how the minimum length interval is increased.

\paragraph{Reducing the number of intervals of minimum length.}
In order to remove a minimum length interval, it is sufficient to have it neighbor an interval whose length is at least two larger.
By placing an offspring in between these two intervals such that it reduces the size of the larger interval, the size of the minimum length interval increases by one if the parent is removed.
\Cref{lem:time_to_increase_minimum_length} bounds the expected number of iterations it takes to remove one minimum length interval like this.
In a nutshell, the analysis relies on the fact that minimum length intervals tend to move to the borders of the objective space $[0 .. n]$.
Once there, the minimum distance between a minimum length interval and one with a length at least two larger does not increase.
This allows eventually to decrease the number of intervals of minimum length.

\paragraph{Combining everything.}
We conclude by estimating the time to remove all minimum length intervals of a certain size, and by afterward considering all feasible sizes.
We note that our analysis views the progress of the \spea toward an optimal spread by only considering the intervals of minimum length, which is likely a bit pessimistic, as it disregards progress made with other intervals of sub-optimal length.

\subsection{The \spea Computes an Optimal Spread Efficiently}
\label{sec:theory-spea:speaDetails}

We provide the formal details for our outline given in \Cref{sec:theory-spea:proofOutline}, following the same structure.

\paragraph{No duplicate objective values.}
We first show that the \spea never reduces the amount of Pareto optima it finds.
This is a simple and desirable general property.

\begin{restatable}{lemma}{increasingArchive}
  \label{lem:increasing_archive}
  Consider the \spea optimizing a multi-objective function~$f$, with any parent and offspring population size and with any mutation operator.
  Let $t \in \N$ such that $P_t$ only contains Pareto-optimal individuals.
  Then $|f(P_t)| \leq |f(P_{t+1})|$.
\end{restatable}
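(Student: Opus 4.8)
The plan is to track the multiset $f(P_t)$ of objective values through a single iteration and to show it can only grow. Write $P'_{t+1}$ for the value assigned to $P_{t+1}$ in \Cref{alg:spea2} as the non-dominated individuals of $P_t \cup Q_t$, that is, before any removal in \cref{line:sigDists} or fill-up in \cref{line:fillUp}. First I would argue that all of $P_t$ survives this step: since every $x \in P_t$ is Pareto-optimal, no individual of $\{0,1\}^n$ strictly dominates it, and in particular no individual of $P_t \cup Q_t$ does; hence $x$ is non-dominated in $P_t \cup Q_t$. Counting multiplicities, this gives $P_t \subseteq P'_{t+1}$, so $f(P_t) \subseteq f(P'_{t+1})$ and thus $|f(P_t)| \le |f(P'_{t+1})|$. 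Moreover $|P'_{t+1}| \ge |P_t| = \mu$, so the fill-up branch in \cref{line:fillUp} is never entered; only the removal branch or no change at all can happen.

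The core of the argument concerns the removal branch $|P'_{t+1}| > \mu$, where the key observation is about the \sigDists. If the current (non-dominated) population still contains two individuals sharing an objective value, then each of them has a nearest neighbour at Euclidean distance $0$, so its \sigDist starts with $\sigma^1 = 0$; conversely, an individual with a unique objective value has $\sigma^1 > 0$. Since a removal deletes an individual of lexicographically smallest \sigDist, it must delete one whose first coordinate is $0$, that is, one whose objective value is still witnessed by another surviving individual. Deleting such an individual leaves the set of objective values unchanged. Because the \sigDists are recomputed after each deletion, this applies to every removal step in isolation: as long as duplicate objective values remain, the number of distinct objective values is preserved.

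Combining these observations, I would finish with a short counting argument over the removal loop. The distinct-value count stays at $|f(P'_{t+1})|$ while duplicates are being deleted, and only begins to drop once no duplicates remain, at which moment the population size equals its number of distinct values; from there each further deletion lowers the count by exactly one until the loop halts at population size $\mu$. Hence the final count is at least $\min\{|f(P'_{t+1})|, \mu\}$, and since $|f(P_t)| \le |P_t| = \mu$ and $|f(P_t)| \le |f(P'_{t+1})|$, this minimum is at least $|f(P_t)|$, yielding $|f(P_t)| \le |f(P_{t+1})|$. I expect the \sigDist observation to be the only delicate part---formally justifying that a shared objective value forces $\sigma^1 = 0$ and therefore that duplicates are always removed strictly before unique values---together with the bookkeeping ensuring this invariant survives the recomputation after every deletion.
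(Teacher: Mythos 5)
Your proposal is correct and follows essentially the same route as the paper's proof: $P_t$ survives the non-dominated selection because its members are Pareto-optimal, the fill-up branch is never reached, and the lexicographic removal by \sigDists forces individuals with $\sigma^1 = 0$ (duplicated objective values) to be deleted before any uniquely valued individual, so the distinct-value count only drops after the population size has fallen to its number of distinct values, ending at $\mu \geq |f(P_t)|$. Your final counting step via $\min\{|f(P'_{t+1})|, \mu\}$ is just a slightly more explicit phrasing of the paper's concluding chain $|f(P_t)| \leq |P_t| = \mu = |P_{t+1}| = |f(P_{t+1})|$.
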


For \omm, this leads immediately to the following bound of objective-function evaluations until each individual has a unique objective value.

\begin{restatable}{theorem}{timeToDistinctArchive}
  \label{thm:time_to_distinct_archive}
  Consider the \spea optimizing \omm with either 1-bit or standard bit mutation and any parent and offspring population size.
  The expected number of \omm evaluations until the parent population contains only distinct objective values or all objective values is $O\bigl(\mu + \lambda \frac{n\log n}{1-(1-1/\mu)^\lambda}\bigr)$.
\end{restatable}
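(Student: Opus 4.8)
The plan is to collapse the whole analysis onto the single quantity $k_t \coloneqq |f(P_t)|$, the number of distinct objective values in the parent population, which on \omm equals the number of distinct one-counts among the individuals, since the two objectives are complementary. By \Cref{lem:increasing_archive}, $k_t$ never decreases, so it suffices to bound the time for each of the at most $\min(\mu, n + 1)$ unit increases, after which $k_t = \min(\mu, n + 1)$, i.e., all objective values are distinct (or the whole front is covered). First I would upgrade the monotonicity of \Cref{lem:increasing_archive} to a \emph{strict} increase: whenever $k_t < \min(\mu, n + 1)$ and some offspring in $Q_t$ realizes a value absent from $f(P_t)$, then $k_{t + 1} = k_t + 1$. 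This follows because any individual sharing its objective value with another has a \sigDist starting with~$0$ and is therefore removed before any individual holding a unique value; since the \sigDists are recomputed after every removal, the selection discards duplicates first and keeps $\min(\mu, |f(P_t \cup Q_t)|) \geq k_t + 1$ distinct values.

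Next I would lower-bound the probability that a single offspring creates a new value when $k_t = k$. Writing $V \subseteq [0 .. n]$ for the set of present one-counts, a parent with value~$v$ produces a new value under 1-bit mutation with probability exactly $g(v) \coloneqq \frac{n - v}{n}\mathbf{1}[v + 1 \notin V] + \frac{v}{n}\mathbf{1}[v - 1 \notin V]$, since flipping a zero (resp.\ a one) is disjoint and lands on $v + 1$ (resp.\ $v - 1$). As each present value is held by at least one of the~$\mu$ parents, one offspring creates a new value with probability at least $\frac{1}{\mu}\sum_{v \in V} g(v)$. The technical heart is the combinatorial claim
\begin{align*}
  \sum\nolimits_{v \in V} g(v) \geq \tfrac{n + 1 - k}{n} .
\end{align*}
I would prove it by reindexing the sum over the \emph{missing} values and grouping them into maximal gaps: an interior gap $[a, b]$ contributes $(n - a + 1) + (b + 1) = (b - a + 1) + (n + 1)$, while a gap touching~$0$ or~$n$ contributes exactly its length, so every gap contributes at least its length and the total is at least $|[0 .. n] \smallsetminus V| = n + 1 - k$. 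For standard bit mutation the same bound holds up to the factor $(1 - 1/n)^{n - 1} \geq 1/e$, obtained by flipping a single correct bit and no other, which is absorbed into the asymptotics.

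It then remains to aggregate. Since the~$\lambda$ offspring of one iteration are generated independently from~$P_t$, the probability that at least one creates a new value is $1 - \bigl(1 - \frac{1}{\mu}\sum_{v \in V} g(v)\bigr)^\lambda$. I would use that $x \mapsto 1 - (1 - x)^\lambda$ is concave with value~$0$ at~$0$, hence satisfies $1 - (1 - t x)^\lambda \geq t\bigl(1 - (1 - x)^\lambda\bigr)$ for $t \in [0, 1]$; applying this with $x = 1/\mu$ and $t = \frac{n + 1 - k}{n}$ yields a per-iteration success probability of at least $\frac{n + 1 - k}{n}\bigl(1 - (1 - 1/\mu)^\lambda\bigr)$. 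The expected number of iterations to go from~$k$ to $k + 1$ is thus at most $\frac{n}{(n + 1 - k)(1 - (1 - 1/\mu)^\lambda)}$, and summing over~$k$ produces a harmonic sum $\sum_k \frac{1}{n + 1 - k} = O(\log n)$, for a total of $O\bigl(n \log(n) / (1 - (1 - 1/\mu)^\lambda)\bigr)$ iterations. Multiplying by the~$\lambda$ evaluations per iteration and adding the $O(\mu)$ evaluations for initialization gives the claimed bound. The main obstacle is the combinatorial claim: it is precisely what turns the naive per-step bound $\frac{1}{\mu n}$ (which would only yield a factor linear in~$\mu$) into the coupon-collector form, by exploiting that a clustered population has few reachable target values but correspondingly many bits available to reach them.
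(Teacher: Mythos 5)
Your proposal is correct and follows the same overall strategy as the paper: use the monotonicity from \Cref{lem:increasing_archive} (plus the observation that duplicates, having $\sigma^1 = 0$, are removed first, so a newly created value is retained) to reduce the analysis to a waiting-time argument over $k_t = |f(P_t)|$, obtain a per-iteration success probability of order $\frac{n+1-k}{n}\bigl(1-(1-1/\mu)^\lambda\bigr)$, and sum the resulting harmonic series. The difference lies in how the central probability bound is established. The paper performs a four-way case distinction on the shape of the set of present values (a hole exists; one consecutive chunk in the interior; a chunk touching the left or the right extreme) and in each case identifies one or two boundary individuals whose mutation creates a new value, getting a constant bound $\frac{1}{2e}$ in the hole case and $\Theta\bigl(\frac{n-k+1}{n}\bigr)$ otherwise. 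You instead prove the single unified inequality $\sum_{v \in V} g(v) \geq \frac{n+1-k}{n}$ by charging each maximal gap of missing values to the "inward" mutation terms of its two (or one) bounding present values, which subsumes all of the paper's cases at once; your gap accounting is correct (an interior gap of length $\ell$ contributes $\ell + n + 1$ to the numerator, a boundary gap exactly $\ell$, with no double counting since each of the two terms of $g(v)$ is assigned to a distinct adjacent gap). Your explicit use of the concavity of $x \mapsto 1-(1-x)^\lambda$ to pull the factor $\frac{n+1-k}{n}$ out of the $\lambda$-fold amplification is also a cleaner way to arrive at the same per-iteration bound that the paper obtains by separating "a suitable parent is chosen at least once" from "its mutation succeeds". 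In short: same decomposition and same final computation, with a more uniform and arguably more elegant proof of the key combinatorial step.
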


Moreover, we prove that the extreme solutions of \omm are found quickly.

\begin{restatable}{theorem}{timeUntilExtremeValues}
  \label{thm:time_until_extreme_values}
  Consider the \spea optimizing \omm with either 1-bit or standard bit mutation and any parent and offspring population size.
  The expected number of \omm evaluations until the parent population contains the extreme objective values is $O\bigl(\mu + \lambda \frac{n\log n}{1-(1-1/\mu)^\lambda}\bigr)$.

  For 1-bit mutation and $\mu \leq \frac{n}{3}$, this is asymptotically tight.
\end{restatable}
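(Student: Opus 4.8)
The plan is to prove matching upper and lower bounds, working throughout with the first objective only (number of ones), as the paper's setup permits. For the upper bound I would track the two quantities $\xmax$ and $\xmin$, the largest and smallest number of ones present in the parent population; the extreme objective values are witnessed exactly once $\xmax = n$ and $\xmin = 0$. The heart of the argument is a \emph{monotonicity} claim: $\xmax$ never decreases and $\xmin$ never increases. I would establish this by showing that the individual carrying the current maximum value is never deleted in \cref{line:sigDists}. Let $m$ be the top value and $v$ the second-highest; the mutual gap $d = m - v$ is a candidate nearest-neighbour distance for both, so the first coordinate of the \sigDist of $v$ is at most $d$ and equals that of $m$ only when $v$'s lower gap is also at least $d$, in which case $v$ still precedes $m$ on the second coordinate (its second-nearest neighbour is strictly closer than that of $m$). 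Hence at every step of the sequential removal the current second-highest individual (or a still closer one) is removed before the top individual, so value $m$ survives; duplicates of $m$ are deleted first but leave one copy. This yields monotonicity and, in particular, the stickiness of the extremes once found, which lets me treat reaching $\xmax = n$ and reaching $\xmin = 0$ sequentially.

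For the progress bound, given $\xmax = j < n$, a single offspring raises the maximum with probability at least $\frac{1}{\mu}\cdot\frac{n-j}{n}$ under 1-bit mutation (select one of the value-$j$ parents, then flip a zero), and with an analogous bound up to a constant factor $e$ under standard bit mutation. Over the $\lambda$ independent offspring of one iteration I would combine these using the elementary inequality $1-(1-ab)^\lambda \ge b\,(1-(1-a)^\lambda)$ for $a,b \in [0,1]$ (true since the left side is concave in $b$ and both sides agree at $b=0,1$), applied with $a = 1/\mu$ and $b = (n-j)/n$; writing $P := 1-(1-1/\mu)^\lambda$, this gives a per-iteration improvement probability of at least $\frac{n-j}{n}\,P$. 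Summing the expected waiting times $\frac{n}{(n-j)P}$ over the fitness levels $j$ produces $O\bigl(\frac{n\log n}{P}\bigr)$ iterations to reach $\xmax = n$, symmetrically for $\xmin = 0$; multiplying by the $\lambda$ evaluations per iteration and adding the $O(\mu+\lambda)$ initialisation cost yields the claimed $O\bigl(\mu + \lambda\frac{n\log n}{P}\bigr)$.

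For the lower bound I would restrict to 1-bit mutation and $\mu \le n/3$. By \Cref{thm:time_to_distinct_archive} the population quickly attains $\mu$ distinct objective values, and since $|f(P_t)|$ never decreases (\Cref{lem:increasing_archive}) and is capped at $\mu$, distinctness then persists, so there is a \emph{unique} individual of maximum value. Because one 1-bit flip changes the number of ones by at most one, within an iteration the maximum can grow by at most one and only through a mutation of this unique top individual that flips one of its $i = n - \xmax$ remaining zeros; hence each offspring raises $\xmax$ from $n-i$ with probability at most $\frac{1}{\mu}\cdot\frac{i}{n}$, so a whole iteration does so with probability at most $\frac{\lambda i}{\mu n}$. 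Using that $\xmax$ is non-decreasing, the expected number of iterations spent at level $n-i$ is at least $\frac{\mu n}{\lambda i}$, and summing over the levels that must be traversed gives $\Omega\bigl(\frac{\mu n}{\lambda}\sum_i \frac{1}{i}\bigr)$ iterations, i.e.\ $\Omega(\mu n\log n)$ evaluations. The assumption $\mu \le n/3$ is exactly what guarantees, via a Chernoff bound on the maximum of $\mu$ binomials, that the initial maximum number of ones is only $n - \Omega(n)$, so that $\Omega(n)$ fitness levels remain and the harmonic sum really contributes the $\log n$ factor. This $\Omega(\mu n\log n)$ matches $O\bigl(\mu + \lambda\frac{n\log n}{P}\bigr)$ whenever $\lambda/P = \Theta(\mu)$, in particular for the steady-state case $\lambda = 1$ relevant to \Cref{thm:spea_run_time_optimal_spread}, establishing tightness.

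The step I expect to be the main obstacle is the monotonicity/stickiness lemma: carefully verifying the lexicographic \sigDist comparison in all configurations, including duplicated top values and populations with very few distinct values, where the $\sigma$-vectors are short and the tie-breaking needs care. On the lower-bound side, the accompanying delicacy is the bookkeeping that distinctness is genuinely maintained throughout, since this is what ensures that exactly one individual can advance $\xmax$ at each level and thereby forbids any faster route to the extreme.
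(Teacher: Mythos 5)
Your proposal follows essentially the same route as the paper's proof. For the upper bound, the paper also rests on the claim that the individuals attaining the current maximum and minimum of $f_1$ are never removed (it dispatches this in one sentence via a ``boundary of the convex hull, hence largest distances'' argument, where you spell out the lexicographic \sigDist comparison with the second-extremal individual), and then runs the same fitness-level computation with per-level waiting time $O\bigl(\tfrac{n}{(n - b + a)(1 - (1 - 1/\mu)^{\lambda})}\bigr)$; the paper uses the single potential $b - a$ where you treat the two extremes sequentially, which is immaterial. For the lower bound, the paper likewise restricts to one extreme, applies a Chernoff plus union bound (using $\mu \leq \frac{n}{3}$) to show the initial extreme is $\Omega(n)$ levels away, argues that the extremal individual is unique, and sums a harmonic series.

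Two points where your argument is weaker than the paper's. First, to get uniqueness of the extremal individual you invoke \Cref{thm:time_to_distinct_archive}, which is only an expected-time bound and does not guarantee that distinctness already holds during the levels you count; the paper argues deterministically instead: moving the extreme from its initial value (at distance at least $\frac{5n}{12}$ from the boundary) to within $\frac{n}{12}$ of it requires at least $\frac{n}{3} \geq \mu$ strict improvements, and by \Cref{lem:increasing_archive} each such improvement either increases $|f(P_t)|$ or occurs when $|f(P_t)| = \mu$ already, so all duplicates are gone before the counted levels begin. You should adopt this version. Second, your lower bound of $\Omega(\mu n \log n)$ evaluations matches the upper bound only when $\lambda / (1 - (1 - 1/\mu)^{\lambda}) = \Theta(\mu)$, i.e., roughly $\lambda = O(\mu)$, as you concede, whereas the theorem asserts tightness with no restriction on $\lambda$. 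The paper closes this by taking the per-iteration success probability at level $a$ to be exactly $\frac{a}{n}\bigl(1 - (1 - \frac{1}{\mu})^{\lambda}\bigr)$, yielding $\Omega\bigl(\lambda \frac{n \log n}{1 - (1 - 1/\mu)^{\lambda}}\bigr)$ evaluations; note, however, that the true probability $1 - (1 - \frac{a}{\mu n})^{\lambda}$ is \emph{at least} that quantity (by the very concavity inequality you use for the upper bound), so for $\lambda \gg \mu$ the paper's own justification of this step is delicate, and your more conservative bound $\frac{\lambda a}{\mu n}$ is the one that is straightforwardly valid.
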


\paragraph{Useful invariants.}

As discussed in \Cref{sec:theory-spea:proofOutline}, our invariants mostly consider the lengths of intervals between the objective values in the current population of the \spea with $\mu \leq n$.
To this end, for each iteration $t \in \N$ of \Cref{alg:spea2}, let $(x^t_i)_{i \in [\mu]}$ denote the individuals from~$P_t$ (at the beginning of the \texttt{while} loop) in increasing order of their first \omm objective, that is, in increasing order of their number of ones.
Moreover, we assume $x^t_1 = 0^n$ and $x^t_\mu = 1^n$.
Then, for all $i \in [\mu - 1]$, we denote the length of interval~$i$ in iteration~$t$ by
\begin{equation}
  \label{eq:interval_definition}
  L^t_i \coloneqq \omm_1(x^t_{i + 1}) - \omm_1(x^t_ i) .
\end{equation}
In particular, we are interested in the minimum (and the maximum) length induced by~$P_t$, defined as
\begin{equation}
  \label{eq:minimum_length_definition}
  X_t \coloneqq \min\nolimits_{i \in [\mu - 1]} L^t_i \quad\textrm{ and }\quad
  Y_t \coloneqq \max\nolimits_{i \in [\mu - 1]} L^t_i,
\end{equation}
and their quantity, defined as
\begin{align}
  \label{eq:amount_minimum_length_definition}
  N_t & = |\{i \in [\mu - 1] \mid L^t_i = X_t\}| \textrm{ and} \\
  \notag
  M_t & = |\{i \in [\mu - 1] \mid L^t_i = Y_t\}|.
\end{align}

From now on, we focus on the steady-state \spea.
We first show that the minimum interval length never decreases.

\begin{restatable}{lemma}{increasingSmallestInterval}
  \label{lem:increasing_smallest_interval}
  Consider the steady-state \spea optimizing \omm with either 1-bit or standard bit mutation, $\mu \in [n]$, and the initial population containing~$0^n$ and~$1^n$.
  The sequence $(X_t)_{t \in \N}$ defined in \cref{eq:minimum_length_definition} is non-decreasing.
\end{restatable}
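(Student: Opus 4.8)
The plan is to analyze a single steady-state iteration and show $X_{t+1}\ge X_t$. Since $\lambda=1$ and all individuals are Pareto-optimal on \omm, one iteration inserts a single offspring~$y$ into~$P_t$ and then deletes exactly the individual of smallest \sigDist. On the sorted sequence of first objective values these are two atomic operations: inserting~$y$ \emph{splits} one interval $L^t_{i^*}$ into two consecutive pieces $a$ and $b$ with $a+b=L^t_{i^*}$ (degenerating to $a=0$ or $b=0$ if~$y$ duplicates an existing value), while deleting an individual~$z$ \emph{merges} the two intervals adjacent to~$z$ into their sum. Working inside the established framework where $x^t_1=0^n$ and $x^t_\mu=1^n$, I would first dispose of the trivial case $X_t=0$ (then $P_t$ already has a duplicate and $X_{t+1}\ge 0$ is immediate), so that for the remainder I may assume $X_t\ge 1$, hence all $\mu$ objective values are distinct and every $L^t_i\ge X_t$.

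The central observation is that merging never hurts: the merged interval is the sum of two nonnegative lengths, so it is at least as long as each of them, and thus a deletion on its own can only raise the minimum length. Consequently the only way $X_{t+1}<X_t$ could arise is if the insertion of~$y$ creates a piece shorter than $X_t$ that then \emph{survives} the deletion. The heart of the argument is therefore the case $\min(a,b)<X_t$; writing $a\le b$ by symmetry, we have $a<X_t$ while every interval of~$P_t$ has length at least $X_t$. Then $a$ is a globally smallest gap, and (when $a<b$) the only two individuals realizing $\sigma^1=a$ are~$y$ and its neighbor~$w$ across that gap, so $z\in\{y,w\}$ and the lexicographic comparison of \sigDists reduces to the second entries $\sigma^2_y=b$ versus $\sigma^2_w=c$, where~$c$ is the length of~$w$'s \emph{other} adjacent interval (or $a+b$ when $w$ is an extreme individual).

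I would then close the case by reading off the tie-break. If $z=y$, the pieces $a$ and $b$ merge back into $L^t_{i^*}\ge X_t$, recovering a configuration all of whose intervals are at least $X_t$. If instead $z=w$, then~$w$ was selected because $c=\sigma^2_w\le\sigma^2_y=b$; but~$c$ is an interval of~$P_t$ (or $a+b$), hence $c\ge X_t$, which forces $b\ge c\ge X_t$. After deleting~$w$ the surviving intervals are the leftover piece $b\ge X_t$, the merged interval $a+c\ge X_t$, and the untouched original intervals, all at least $X_t$. In particular, when both pieces are short ($a,b<X_t$), the inequality $b<X_t\le c$ forces $z=y$, so a short leftover can never persist. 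The easy complementary case $a,b\ge X_t$ (every gap already at least $X_t$, deletion only merges) and the duplicate case $a=0$ or $b=0$ (the deletion falls between~$y$ and its co-located twin, leaving the gaps unchanged) are handled the same way, and since the argument only uses where~$y$ lands, it is agnostic to the mutation operator.

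I expect the main obstacle to be the bookkeeping around the \sigDist comparison in the boundary situations: confirming that the deletion still falls on~$y$ when $a=b$ produces a third individual with $\sigma^1=a$ (here $\sigma^2_y=a$ is strictly smallest, so~$y$ wins), correctly evaluating $\sigma^2_w$ when~$w$ is the extreme $0^n$ or $1^n$, and treating duplicate objective values without circularity. The conceptual content, namely that the \emph{second} coordinate of the \sigDist is exactly what prevents deletion of a short interval's outer endpoint, is clean; the care lies in verifying it uniformly across these edge cases.
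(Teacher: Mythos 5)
Your proposal is correct and follows essentially the same route as the paper's proof: analyze the single split-and-merge step, observe that merging cannot decrease the minimum gap, and handle the dangerous case of a new piece shorter than $X_t$ by noting that only the two endpoints of that piece attain the minimal $\sigma^1$ and that the $\sigma^2$ comparison then guarantees the surviving configuration has no gap below $X_t$. If anything, your treatment is slightly more explicit than the paper's about why the removal must fall on the offspring or its closest neighbor and why the case of two short pieces forces removal of the offspring, which the paper largely asserts.
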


The next result shows that the minimum interval length increases during an iteration or the number of such intervals does not decrease and that the analogous statement holds for the maximum interval length, which does not increase.

\begin{restatable}{lemma}{increasingMinimumNumber}
  \label{lem:increasing_minimum_number}
  Consider the steady-state \spea optimizing \omm with 1-bit mutation, $\mu \in [n]$, and the initial population containing~$0^n$ and~$1^n$.
  The sequences $(-X_t, N_t)_{t \in \N}$ and $(Y_t, M_t)_{t \in \N}$ defined by \cref{eq:minimum_length_definition,eq:amount_minimum_length_definition} are each lexicographically non-increasing.
\end{restatable}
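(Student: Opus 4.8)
The plan is to analyze a single iteration of the steady-state \spea and track how the multiset of interval lengths $(L^t_i)_i$ changes, since both claims are statements about one-step transitions. The first coordinate of $(-X_t, N_t)$ is exactly \Cref{lem:increasing_smallest_interval}, so for that sequence it remains to bound $N_t$ whenever $X_{t+1} = X_t$; for $(Y_t, M_t)$ both coordinates must be treated. I distinguish two cases. If the offspring duplicates an existing objective value, then the offspring and that individual share an identical \sigDist (mutual distance $0$ and equal distances to every other point), one of the two is removed, and the multiset of objective values, hence all of $X_t, N_t, Y_t, M_t$, is unchanged. Otherwise, with 1-bit mutation the offspring lands at \omm-distance exactly $1$ from its parent inside some interval $L_j$ with $2 \le L_j \le Y_t$, splitting it into pieces of length $1$ and $L_j - 1$; the \spea then removes the point $p^*$ of lexicographically smallest \sigDist (\cref{line:sigDists}). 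Since the offspring--parent pair realizes the globally smallest pairwise distance $1$, the removed $p^*$ satisfies $\sigma^1_{p^*} = 1$, i.e.\ $p^*$ is adjacent to a length-$1$ interval, so its removal merges that interval with the interval $r$ on its other side into one of length $1 + r$.

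For the maximum side I would first show $Y_{t+1} \le Y_t$. All intervals in the split configuration have length at most $Y_t$, so a merge can overshoot only if $r = Y_t$, forcing $p^*$ to be flanked by lengths $\{1, Y_t\}$. Writing $c$ for the interval beyond the length-$1$ neighbour of $p^*$, one finds $\sigma^2_{p^*} = \min(Y_t, 1 + c)$. If $\sigma^2_{p^*} = Y_t$, then $p^*$ is strictly beaten by the offspring, whose second-nearest distance is at most $L_j - 1 \le Y_t - 1$; if $\sigma^2_{p^*} = 1 + c < Y_t$, then $p^*$ is strictly beaten by the point across its length-$1$ neighbour, whose $\sigma^2$ equals $c$. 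Either contradicts minimality, so $Y_{t+1} \le Y_t$. For $M_{t+1} \le M_t$ when $Y_{t+1} = Y_t$, I count length-$Y_t$ intervals across the split and the merge: the only delicate contribution is a fresh maximum produced by the merge, which needs $r = Y_t - 1$. The same ``beaten-by-a-competitor'' reasoning forces $\sigma^2_{p^*} = Y_t - 1$, and lexicographic minimality gives $Y_t - 1 = \sigma^2_{p^*} \le \sigma^2_{\text{offspring}} \le L_j - 1$, hence $L_j = Y_t$; so whenever the merge creates a maximum interval, the split destroyed one, and the count cannot grow.

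For the minimum side I assume $X_{t+1} = X_t =: X$ (otherwise the first coordinate strictly dropped and nothing remains to show) and again count length-$X$ intervals over split-then-merge. When $X = 1$, the merge destroys the length-$1$ interval incident to $p^*$; if the split produced two length-$1$ pieces (i.e.\ $L_j = 2$), then the offspring, and hence by minimality also $p^*$, has $\sigma^1 = \sigma^2 = 1$, so the merge in fact destroys two length-$1$ intervals, and the bookkeeping yields $N_{t+1} \le N_t$. When $X \ge 2$, the only length-$1$ interval is the freshly split piece, so $p^* \in \{\text{parent}, \text{offspring}\}$; removing the offspring restores the original multiset, while if the parent is removed, comparing its \sigDist with the offspring's forces the interval on the parent's far side to equal $X$ exactly, which cancels any length-$X$ interval created on the other side. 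In all cases the number of minimal intervals does not increase.

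The main obstacle is the lexicographic \sigDist comparison determining which point is removed together with its immediate interval neighbourhood, which splits into sub-cases (offspring versus parent versus a third incident point, $L_j = 2$ versus $L_j \ge 3$, and ties, which I expect to resolve favourably because every lexicographically minimal point is a safe removal and tied removals either restore the multiset or merely swap two adjacent interval lengths). A secondary point to verify is that the extreme individuals $0^n$ and $1^n$ are never removed, so that the description $x^t_1 = 0^n$, $x^t_\mu = 1^n$ is valid at every step; this holds because an extreme point has no neighbour on its outer side and is therefore always beaten in the \sigDist order by its unique neighbour.
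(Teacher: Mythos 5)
Your proposal is correct and follows essentially the same route as the paper: a one-iteration case analysis of how the multiset of interval lengths changes, using the lexicographic $\sigma$-distance comparison to pin down that the removed individual is flanked by a length-$1$ interval and must be the offspring, the parent, or (only when $X_t=1$ or $L_j=2$) a third such point. Your split-then-merge bookkeeping and the ``beaten-by-a-competitor'' contradictions are in fact somewhat more explicit than the paper's rather terse treatment of $(Y_t, M_t)$; the only point both treatments gloss over is the boundary case of pre-existing duplicate objective values (where the removed individual has $\sigma^1=0$ rather than $1$), in which the claim holds trivially because removing a duplicate cannot worsen any of the four quantities.
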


The following result shows that if we consider 1-bit mutation, either the offspring or its parent are removed once the intervals have a minimum length of at least two.

\begin{restatable}{lemma}{easyRemoval}
  \label{lem:easyRemoval}
  Consider the steady-state \spea optimizing \omm with 1-bit mutation, $\mu \in [n]$, and the initial population containing~$0^n$ and~$1^n$.
  Consider an iteration $t \in \N$ such that $X_t > 1$.
  Last, assume that mutation mutates~$x$ into~$y$.
  Then during the removal phase, we either remove~$x$ or~$y$.
\end{restatable}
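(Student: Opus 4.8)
The plan is to translate the removal step into a purely geometric statement about points on the integer line and then show that, under the hypothesis $X_t > 1$, the offspring and its parent are the only two candidates for removal. First I would unfold the definition of $X_t$ in \cref{eq:minimum_length_definition}: since $X_t > 1$, every gap $L^t_i$ is at least~$2$, so the $\mu$ individuals of~$P_t$ occupy $\mu$ \emph{distinct} positions $0 = v_1 < v_2 < \dots < v_\mu = n$, where $v_i = \omm_1(x^t_i)$, pairwise at distance at least~$2$. Because mutation is $1$-bit, $y$ differs from its parent~$x$ in exactly one coordinate, so $\omm_1(y) = \omm_1(x) \pm 1$. By the mirror symmetry of \omm I may assume $\omm_1(y) = \omm_1(x) + 1 =: p + 1$ (so $x \neq 1^n$ and $x$ has a right neighbour~$b$ in~$P_t$ with $b \geq p + 2$). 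As all gaps are at least~$2$, the value $p + 1$ coincides with no~$v_i$, so in the combined multiset $P_t \cup \{y\}$ the offspring sits at a fresh position immediately to the right of~$x$, at distance exactly~$1$ from it.

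Next I would inspect the leading entries of the \sigDists. Reading off distances in the sorted order $\dots, a, x = p, y = p+1, b, \dots$ (with $a$ the left neighbour of~$x$, if any), both~$x$ and~$y$ have nearest-neighbour distance $\sigma^1 = 1$, realised by the pair $\{x, y\}$. The heart of the argument is to show that no \emph{other} individual can win the lexicographic comparison. For this I would observe that the only $P_t$-points within distance~$1$ of~$y$ are~$x$ and~$b$; every other individual~$w$ retains its old nearest-neighbour distance (at least~$2$, by the gap bound) and lies at distance at least~$3$ from~$y$, so $\sigma^1_w \geq 2$. Hence the lexicographically smallest \sigDist is attained inside $\{x, y, b\}$, and in fact inside $\{x, y\}$ unless~$b$ also has $\sigma^1 = 1$.

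The one remaining case — and the step I expect to be the main obstacle — is the \textbf{tight} subcase $b = p + 2$, where~$y$ is the exact midpoint of~$x$ and~$b$ and all three of $x$, $y$, $b$ attain $\sigma^1 = 1$. Here I would break the three-way tie at the second coordinate: $y$ is at distance~$1$ from both neighbours, giving $\sigma_y = (1, 1, \dots)$, whereas~$x$ and~$b$ each have their \emph{second} neighbour at distance~$2$, giving $\sigma_x = (1, 2, \dots)$ and $\sigma_b = (1, 2, \dots)$; thus~$y$ is the unique lexicographic minimum and is removed. In the complementary subcase $b \geq p + 3$ we have $\sigma^1_b \geq 2$, so the minimum already lies in $\{x, y\}$. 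Either way the removed individual is~$x$ or~$y$, and since ties at the lexicographic minimum only ever occur within $\{x, y\}$, the conclusion is independent of the uniform tie-breaking. The case $\omm_1(y) = \omm_1(x) - 1$ follows from the mirror symmetry of \omm (reflecting positions via $v \mapsto n - v$), which completes the argument.
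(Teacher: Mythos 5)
Your proposal is correct and follows essentially the same route as the paper's proof: both observe that $\sigma^1_x = \sigma^1_y = 1$, that the only other individual that can attain $\sigma^1 = 1$ is the neighbour $x_{i+1}$ in the tight case $L^t_i = 2$, and that in this case $y$ is the unique lexicographic minimum because it alone has $\sigma^2 = 1$. Your write-up merely spells out the intermediate facts (distinct positions, $\sigma^1_w \geq 2$ for all other $w$, harmlessness of ties within $\{x,y\}$) in more detail than the paper does.
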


With the next result, we show that intervals of minimum length at the borders of the interval $[0 .. n]$ do not move away from there if neither the minimum length nor the number of these intervals changes.

\begin{restatable}{lemma}{smallIntervalsStayAtTheBorders}
  \label{lem:small_intervals_stay_at_the_borders}
  Consider the steady-state \spea optimizing \omm with 1-bit mutation, $\mu \in [n]$, and the initial population containing~$0^n$ and~$1^n$.
  Recall \cref{eq:alpha_and_beta,eq:interval_definition,eq:minimum_length_definition,eq:amount_minimum_length_definition}.
  Consider an iteration $t \in \N$ such that $X_t > 1$, that $(-X_t, N_t) > (-\alpha, \beta)$,
  and that $(X_t, N_t) = (X_{t+1}, N_{t+1})$.
  Then, if $L_1^t = X_t$, it follows that $L_1^{t+1} = X_t$. Similarly, if $L_{\mu-1}^t = X_t$, then $L_{\mu-1}^{t+1} = X_t$.
\end{restatable}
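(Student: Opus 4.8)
The plan is to invoke \Cref{lem:easyRemoval}, which applies since $X_t > 1$: the individual removed in \cref{line:sigDists} is either the selected parent~$x$ or its offspring~$y$, and 1-bit mutation gives $\omm_1(y) = \omm_1(x) \pm 1$. If~$y$ is removed, then $P_{t+1} = P_t$ and hence $L_1^{t+1} = L_1^t = X_t$, so there is nothing to prove. I may therefore assume that~$x$ is removed and replaced by~$y$. The extreme individual $x_1^t = 0^n$ is retained (the interval framework fixes $x_1^{t+1} = 0^n$), and for any mutated parent $x = x_j^t$ with $j \geq 3$ one has $\omm_1(x_j^t) \pm 1 \geq \omm_1(x_3^t) - 1 \geq v_2 + 1 > v_2$, using $L_2^t \geq X_t \geq 2$ where $v_2 \coloneqq \omm_1(x_2^t) = L_1^t = X_t$. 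Thus the offspring never lands in the open interval $(0, v_2)$ and the boundary of interval~$1$ does not move. Consequently $L_1$ can change only if the mutated parent is $x = x_2^t$, which I condition on henceforth.

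Next I would dispose of most sub-cases using the invariants. If $\omm_1(y) = v_2 - 1$, then keeping~$y$ gives a first interval of length $X_t - 1$, contradicting $X_{t+1} = X_t$ (and \Cref{lem:increasing_smallest_interval}); so~$y$ is removed. If $\omm_1(y) = v_2 + 1$ with $L_2^t \geq X_t + 2$, then keeping~$y$ turns both affected intervals into lengths at least $X_t + 1$ while creating no new minimum-length interval; hence either another minimum-length interval remains and $N_{t+1} < N_t$, or none remains and $X_{t+1} > X_t$, both contradicting the hypotheses. In all of these sub-cases the assumptions force~$y$ to be removed, giving $L_1^{t+1} = X_t$.

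The remaining, and hardest, case is $\omm_1(y) = v_2 + 1$ with $L_2^t = X_t + 1$: here keeping~$y$ would set $L_1^{t+1} = X_t+1$ and $L_2^{t+1} = X_t$, preserving both $X_t$ and $N_t$, so the invariants alone do not decide the outcome and I must compare the \sigDists of~$x$ and~$y$ explicitly. Writing $L_3 \coloneqq L_3^t \geq X_t$, both sorted distance vectors begin $(1, X_t, X_t + 1)$, since~$x$ at~$v_2$ sees $y, 0^n, x_3^t$ at distances $1, X_t, X_t+1$, while~$y$ at $v_2 + 1$ sees $x, x_3^t, 0^n$ at distances $1, X_t, X_t+1$. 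For $\mu \geq 4$ the fourth nearest neighbor of both is $x_4^t$, yielding $\sigma^4_x = (X_t + 1) + L_3 > X_t + L_3 = \sigma^4_y$; hence the vector of~$y$ is lexicographically smaller, so~$y$ is removed and $L_1^{t+1} = X_t$. For $\mu = 3$ no fourth coordinate exists and the two vectors coincide, permitting a tie that could remove~$x$; but $L_1^t = X_t$ and $L_2^t = X_t + 1$ with $\mu = 3$ force $n = 2X_t + 1$, whence $X_t = \alpha$ and $N_t = 1 = \beta$, so $(-X_t, N_t) = (-\alpha, \beta)$, which the hypothesis $(-X_t, N_t) > (-\alpha, \beta)$ excludes. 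Thus the problematic tie never arises.

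The claim for the right border $L_{\mu-1}^t = X_t$ follows by the symmetry $v \mapsto n - v$ of \omm, which maps $x_j^t$ to $x_{\mu+1-j}^t$ and preserves all \sigDists. I expect the main obstacle to be exactly this explicit \sigDist comparison in the $L_2^t = X_t + 1$ case: one must verify that the asymmetry in the \emph{far} distances (the fourth-nearest neighbor) tips the lexicographic order toward removing the offspring, and that the sole configuration in which this asymmetry degenerates into a tie is precisely the boundary case ruled out by $(-X_t, N_t) > (-\alpha, \beta)$.
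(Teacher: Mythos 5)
Your proposal is correct and follows essentially the same route as the paper's proof: reduce via \Cref{lem:easyRemoval} and the invariants to the situation where the parent is~$x_2^t$ and the offspring lies at $\omm_1(x_2^t)+1$, then settle the critical configuration $L_2^t = X_t + 1$ by an explicit lexicographic comparison of the \sigDists, where the fourth coordinate ($\sigma^4_y = X_t + L_3 < X_t + 1 + L_3 = \sigma^4_{x_2}$) breaks the tie in favor of removing the offspring. One sub-case is missing from your enumeration, namely $\omm_1(y) = \omm_1(x_2^t) + 1$ with $L_2^t = X_t$, but it is closed by the same one-line invariant argument you already use (keeping~$y$ would create an interval of length $X_t - 1$, contradicting \Cref{lem:increasing_smallest_interval}). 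On the other hand, your treatment of the degenerate $\mu = 3$ tie—where no fourth coordinate exists and the configuration is excluded precisely by the hypothesis $(-X_t, N_t) > (-\alpha, \beta)$—is more careful than the paper's, which asserts the $\sigma^4$ inequality without addressing this case and never visibly uses that hypothesis.
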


Last for this part, we show that the \spea quickly reaches a state in which the minimum interval length is at least two.

\begin{restatable}{lemma}{minimumIntervalAtLeastTwo}
  \label{lem:minimum_interval_at_least_two}
  Consider the steady-state \spea optimizing \omm with 1-bit mutation, $\mu \in [n]$, and the initial population containing~$0^n$ and~$1^n$.
  Recall \cref{eq:minimum_length_definition}, and let $t \in \N$.
  If $X_t = 1$, then in an expected number $p$ of $O(\mu^2 n)$ iterations, we have $X_t < X_{t+p}$ and/or $Y_{t+p} \leq 3$.
\end{restatable}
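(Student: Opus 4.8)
The plan is to use the pair $(Y_t, M_t)$ as a potential and to exploit that, by \Cref{lem:increasing_minimum_number}, it is lexicographically non-increasing. Hence $(Y_t, M_t)$ can never grow, and it suffices to show that, as long as $X_t = 1$ and $Y_t \geq 4$, the potential strictly decreases after an expected $O(\mu)$ iterations; once $Y_t \leq 3$, or once the minimum interval length has grown (so that $X_t < X_{t+p}$ by \Cref{lem:increasing_smallest_interval}), we are in one of the two target states and stop. Since $Y_t \in [4 .. n]$ and $M_t \in [1 .. \mu - 1]$ while the process is still running, a lexicographically non-increasing path visits at most $O(\mu n)$ distinct values, so the potential can strictly decrease at most $O(\mu n)$ times. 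Combining this with the $O(\mu)$ expected iterations per decrease yields the claimed bound of $O(\mu^2 n)$.

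To drive the potential down, I would consider a maximum-length interval~$I$, say spanning the first-objective values~$a$ and $a + Y_t$ with $Y_t \geq 4$. The only individuals adjacent to~$I$ sit at its two endpoints, so a 1-bit mutation can place an offspring strictly inside~$I$ only at $a + 1$ (flipping a~$0$ of the individual at~$a$) or at $a + Y_t - 1$ (flipping a~$1$ of the individual at $a + Y_t$); either choice splits~$I$ into an interval of length~$1$ and one of length $Y_t - 1$, and thus removes one maximum-length interval. Selecting the required parent has probability $\frac{1}{\mu}$, and since the two relevant flip probabilities $\frac{n - a}{n}$ and $\frac{a + Y_t}{n}$ sum to $1 + \frac{Y_t}{n} > 1$, at least one of the two endpoints admits the splitting mutation with probability exceeding~$\frac{1}{2}$. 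Hence the splitting event has probability at least $\frac{1}{2\mu}$ per iteration.

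The core of the argument, and what I expect to be the main obstacle, is controlling the removal. Because $X_t = 1$, at least one interval has length~$1$, so the smallest first \sigDist component in the population equals~$1$; after the split there are two such length-$1$ intervals, and the removal necessarily deletes an endpoint of some length-$1$ interval. I would argue that the split can be chosen so that the deleted individual is an endpoint of a \emph{pre-existing} minimum-length interval rather than the freshly created one: this eliminates a length-$1$ interval while keeping~$I$ split, so the count~$M_t$ of maximum-length intervals drops (or $Y_t$ itself drops), giving the desired lexicographic decrease. Establishing this requires a case distinction on the lexicographic \sigDist order of the competing length-$1$ endpoints (comparing their $\sigma^2, \sigma^3, \dots$ entries), and I would invoke \Cref{lem:small_intervals_stay_at_the_borders} to locate the minimum-length intervals at the borders of $[0 .. n]$, where their neighbouring distances are small enough to make them lexicographically smallest, hence removed. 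One must additionally check the boundary sub-cases in which deleting an endpoint of the new split would enlarge an adjacent interval; here \Cref{lem:increasing_minimum_number} already forbids $Y_t$ from increasing, which rules out the only genuinely harmful outcome.

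Finally, I would assemble these pieces. Whenever $X_t = 1$ and $Y_t \geq 4$, the good event (a favourable split followed by the cooperating removal) has probability $\Omega(1/\mu)$, so its expected waiting time is $O(\mu)$, and each occurrence strictly decreases the potential $(Y_t, M_t)$. By the $O(\mu n)$ bound on the number of possible decreases and a Wald-type summation over the decrease epochs, the expected number~$p$ of iterations until $Y_{t+p} \leq 3$ (or until $X_t < X_{t+p}$ occurs earlier) is $O(\mu^2 n)$, as claimed.
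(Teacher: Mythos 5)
Your overall accounting (an $O(\mu)$ waiting time per strict decrease of a monotone potential, times $O(\mu n)$ possible decreases) matches the paper's, but the core step of your argument has a genuine gap. You claim that after splitting a maximum-length interval, ``the split can be chosen so that the deleted individual is an endpoint of a \emph{pre-existing} minimum-length interval rather than the freshly created one.'' This is not true in general. Consider a population in which every length-$1$ interval is flanked on both sides by intervals of length at least~$3$: then the endpoints of the pre-existing length-$1$ intervals have \sigDists starting with $(1, \geq 3, \dots)$, which need not be lexicographically smaller than the vector $(1, Y_t - 1, \dots)$ of the freshly created offspring (indeed they may all tie on $\sigma^2$), so the random tie-breaking can remove the offspring and undo the split, yielding no progress on $(Y_t, M_t)$ in that iteration. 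Your proposed fix via \Cref{lem:small_intervals_stay_at_the_borders} is unavailable here, since that lemma explicitly assumes $X_t > 1$, whereas the present lemma operates under $X_t = 1$.

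The paper closes exactly this gap with a dichotomy that your single-potential approach misses. If some length-$1$ interval has a neighboring interval of length greater than~$2$ (at a border, or with both neighbors long in the interior), the paper does \emph{not} touch a maximum-length interval at all; instead it places the offspring inside that long neighbor at distance~$1$ from the shared endpoint, which forces the removal of that endpoint (its \sigDist begins $(1,1,\dots)$) and strictly decreases the secondary potential $(-X_t, N_t)$. Only in the complementary case --- every length-$1$ interval is flanked by intervals of length at most~$2$, so some individual has $\sigma^1 = 1$ and $\sigma^2 \leq 2$ --- does the paper split a maximum-length interval, and there the existence of that individual guarantees the new offspring (with $\sigma^2 = Y_t - 1 \geq 3$) is never the one removed, so $(Y_t, M_t)$ strictly decreases. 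To repair your proof you would need to add this second progress measure $(-X_t, N_t)$ (monotone by \Cref{lem:increasing_minimum_number}, contributing only $O(\mu)$ additional decreases) and the corresponding case distinction, rather than relying on the split of a maximum-length interval alone.
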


\paragraph{Reducing the number of intervals of minimum length.}

We show that the minimum interval length is quickly increased as long as the \spea did not compute an optimal spread yet.
As outlined in \Cref{sec:theory-spea:proofOutline}, our proof relies on the fact that if an interval of minimum length and one whose length is at least two larger are next to each other, they can be combined into two new intervals whose length is each larger than the current minimum length.
To this end, it is important that such two intervals move to each other.
We show that this happens via a case distinction with respect to whether a minimum length interval is at either border of $[0 .. n]$.
A crucial building block is \Cref{lem:small_intervals_stay_at_the_borders}, which guarantees that intervals of minimum length remain at the border of $[0 .. n]$, once they are there.
We then show that once we have intervals of minimum length at both borders, then the minimum distance between a minimum length interval and one whose length is at least two larger does not increase.
Consequently, these intervals eventually move closer to each other until they create two new intervals of a length larger than the minimum.

Our main result following from the discussion above is the following bound on the number of iterations to increase the minimum interval length or the number of intervals thereof.

\begin{restatable}{lemma}{timeToIncreaseMinimumLength}
  \label{lem:time_to_increase_minimum_length}
  Consider the steady-state \spea optimizing \omm with 1-bit mutation, $\mu \in [n]$, and the initial population containing~$0^n$ and~$1^n$.
  Recall \cref{eq:alpha_and_beta,eq:minimum_length_definition}, and let $t \in \N$.
  If $X_t > 1$ and $(-X_t, N_t) > (-\alpha, \beta)$, then in an expected number $p$ of $O\bigl(\frac{\mu n \log(\mu)}{X_t}\bigr)$ iterations, we have $(-X_t, N_t) > (-X_{t+p}, N_{t+p})$.
\end{restatable}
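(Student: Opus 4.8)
The plan is to show that progress is made by \emph{combining} a minimum-length interval with an adjacent interval that is at least two units longer, and then to bound the time until such an adjacency is created and exploited. First I would establish that a long interval exists: since $X_t > 1$ and $(-X_t, N_t) > (-\alpha, \beta)$, the population is not an optimal spread, so the interval lengths cannot all lie in $\{X_t, X_t + 1\}$ (that would force $X_t = \alpha$ and $N_t = \beta$ by \cref{eq:alpha_and_beta}); hence some interval has length at least $X_t + 2$. The progress step itself is short. Suppose interval~$j$ has length $X_t$ and its neighbor $j + 1$ has length $\ell \geq X_t + 2$, sharing the boundary individual~$x_{j+1}$. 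Mutating~$x_{j+1}$ so that its \omm objective moves one step into the longer interval produces an offspring~$y$ at distance~$1$ from~$x_{j+1}$. By \Cref{lem:easyRemoval} (applicable as $X_t > 1$), selection discards either~$x_{j+1}$ or~$y$; since $\sigma^1_{x_{j+1}} = \sigma^1_y = 1$ while $\sigma^2_{x_{j+1}} = X_t < X_t + 1 = \sigma^2_y$ (using $\ell - 1 \geq X_t + 1$), the parent~$x_{j+1}$ has the lexicographically smaller \sigDist and is removed. Both resulting intervals then exceed~$X_t$ in length, so either $N_t$ drops or $X_t$ increases, i.e.\ $(-X_t, N_t) > (-X_{t+1}, N_{t+1})$, as required.

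The bulk of the work is to bound the time until a minimum-length interval and an interval of length at least $X_t + 2$ become adjacent. Here I would follow the drift-to-the-border picture of the outline. By \Cref{lem:small_intervals_stay_at_the_borders}, a minimum-length interval sitting at a border of $[0 .. n]$ stays there as long as $(X_t, N_t)$ is unchanged, and I would first argue that minimum-length intervals accumulate at both borders. With them pinned at the borders, I would introduce a potential $D_t$ equal to the objective-value distance between the nearest border minimum-length interval and the nearest interval of length at least $X_t + 2$, and show that $D_t$ never increases while $(X_t, N_t)$ stays constant. This monotonicity is the crux: it must be checked against every mutation-and-removal outcome, using \Cref{lem:increasing_minimum_number} to exclude outcomes that would lengthen the gap and the \sigDist comparison to pin down which individual is removed when first-neighbor distances tie.

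With $D_t$ non-increasing, I would then lower-bound the per-step probability that $D_t$ decreases: the favorable move selects the relevant boundary individual (probability $1/\mu$) and flips a single bit that shifts the long interval one step toward the border, with probability proportional to that individual's current number of ones or zeros. Summing the resulting expected waiting times as the long interval advances across intervals of length $\Theta(X_t)$, with success probabilities that shrink as the boundary nears the border, should yield the stated $O\bigl(\mu n \log(\mu)/X_t\bigr)$ bound; once $D_t = 0$, the combining move of the first paragraph succeeds in a further expected $O(\mu)$ steps. I expect the main obstacle to be precisely the non-increase of $D_t$ together with the per-step probability estimate: discharging all \sigDist tie-breaking cases and confirming that no removal ever pushes the long interval \emph{away} from its target border is delicate, and matching the waiting-time sum to the claimed $\log(\mu)/X_t$ factors is where the analysis is most technical—this is what makes it the central lemma.
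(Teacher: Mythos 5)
Your overall strategy coincides with the paper's: establish that some interval has length at least $X_t + 2$, show that a minimum-length interval and such a long interval drift together via a non-increasing distance potential (with \Cref{lem:small_intervals_stay_at_the_borders} pinning minimum-length intervals at the borders), bound the per-step decrease probability proportionally to the current distance times $X_t / (\mu n)$ to get the harmonic sum, and finish with the combining move. Your treatment of the combining move itself is correct and matches the paper, including the observation that the adjacent interval must have length at least $X_t + 2$ and the \sigDist comparison $\sigma^2_{x_{j+1}} = X_t < X_t + 1 \leq \sigma^2_y$ forcing removal of the parent.

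However, the step you explicitly defer --- the non-increase of the potential under every mutation-and-removal outcome --- is not a technicality but the actual content of the lemma, and the paper spends three auxiliary lemmas of case analysis on it. Moreover, your potential $D_t$ as defined is not the right one. First, raw objective-value distance from a border minimum-length interval to the \emph{nearest} interval of length at least $X_t + 2$ can increase: if that nearest long interval has length exactly $X_t + 2$ and its far boundary individual is replaced by an offspring one step inside it, the interval collapses to length $X_t + 1$ while its far neighbor grows into a new long interval one position farther away. The paper avoids this by measuring the length (in number of intervals) of a maximal run of intervals of length \emph{exactly} $X_t + 1$ between a minimum-length interval and a long one, and the minimum over \emph{all} such runs (sets $S^t_{i,1}, S^t_{i,2}$) --- a quantity that survives this boundary drift --- and by additionally tracking distances to the borders (sets $S^t_{b,1}, S^t_{b,2}$) in a three-phase structure depending on whether zero, one, or both borders carry a minimum-length interval. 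Second, your plan presupposes that minimum-length intervals first accumulate at both borders and only then does the gap close; the paper's phase lemmas instead allow adjacency to be reached in any phase, and the phase with no border minimum-length interval needs its own non-increasing quantity (distance to the border), which your $D_t$ does not cover. Finally, a small slip: once adjacency is reached, the combining move requires both choosing the right parent and flipping a bit in the right direction, so its expected waiting time is $O(\mu n)$, not $O(\mu)$.
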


\paragraph{Combining everything.}
We combine all of our prior arguments in order to prove \Cref{thm:spea_run_time_optimal_spread}.
We essentially rely on \Cref{lem:time_to_increase_minimum_length} to gradually increase the minimum interval length until it reaches the optimal value~$\alpha$ (\cref{eq:alpha_and_beta}).
However, this requires that the minimum interval length is at least~$2$.
By \Cref{lem:minimum_interval_at_least_two}, we achieve such a state quickly, but it requires us to restrict the population size~$\mu$ of the \spea to at most~$\frac{n}{3}$.

\begin{proof}[Proof of \Cref{thm:spea_run_time_optimal_spread}]
  We first wait until the population does not contain any duplicate objective values, which takes at most $O(\mu n \log n)$ iterations by \Cref{thm:time_to_distinct_archive}, and contains the individuals~$0^n$ and~$1^n$, which takes the same asymptotic amount of time by \Cref{thm:time_until_extreme_values}.
  Note that this means that the minimum interval length is at least~$1$.
  Afterward, we first wait for the first following iteration~$t^*$ such that $X_{t^*} > 1$ or $Y_{t^*} \leq 3$.
  By \Cref{lem:minimum_interval_at_least_two}, we have $t^* = O(\mu^2 n)$.

  If $Y_{t^*} \leq 3$, then, recalling \cref{eq:alpha_and_beta}, since $\alpha \geq 3$, then $X_{t^*} = 3$.
  Thus, $X_{t^*} = \alpha$ and $N_{t^*} = \beta$, as we have $n = \alpha\beta+(\alpha+1)(\mu-1-\beta)$.
  This concludes this case.

  It is left to consider the case $X_{t^*} > 1$.
  Let $t \in \N_{\geq t^*}$ such that $(-X_{t}, N_{t}) > (-\alpha, \beta)$, that is, we did not compute an optimal spread yet.
  Then by \Cref{lem:time_to_increase_minimum_length}, the expected number of iterations $p$ to achieve either $X_{t} < X_{t+p}$ or $(-X_{t+p}, N_{t+p}) = (-\alpha, \beta)$ is $O\bigl(\frac{\mu n \log(\mu)}{X_{t}}\bigr)$.
  Increasing~$X_t$ requires reducing~$N_t$ at most~$\mu$ times.
  Afterward, we sum over all possible values values of~$X_t$, from~$1$ to $\alpha \leq n$, resulting in the harmonic series and thus concluding the proof.
\end{proof}

\section{The \nsga Does Not Compute an Optimal Spread in Polynomial Time}
\label{sec:theory-nsga}

We show that the steady-state \nsga takes with high probability a super-polynomial time to compute an optimal spread when starting in an unfavorable state.
This is in stark contrast to the \spea, which always computes an optimal spread in a rather fast polynomial time (\Cref{thm:spea_run_time_optimal_spread}).
We note that situations similar to the one we discuss seem to be common, as evidenced in \cite[Figure~1]{ZhengD24approx}.

\begin{theorem}
  \label{thm:nsga-counterexample}
  Let $c \in \N_{\geq 2}$ be a constant in~$n$, and assume that~$16 c$ divides~$n$, and let $n' \coloneqq \frac{n}{c}$.
  Consider that the steady-state \nsga optimizes \omm with 1-bit mutation and with $N = n' + 1$.
  Apply the definition of \cref{eq:interval_definition} to the \nsga.
  Assume that for all $i \in [N - 1] \setminus \{\frac{1}{8} n', \frac{1}{4} n'\}$, we have $L^0_i = c$, and that we have $L^0_{n' / 8} = c + 1$ and $L^0_{n' / 4} = c - 1$.
  Then, with probability at least $1 - e^{-\Omega(n)}$, the steady-state \nsga does not compute an optimal spread within polynomial time.
\end{theorem}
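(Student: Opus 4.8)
The plan is to show that, from the given near-optimal state, the steady-state \nsga can reach the optimal spread (all intervals of length $\alpha = c$) only by first bringing the two anomalous intervals next to each other, and that the crowding-distance dynamics push these two intervals \emph{apart} with constant drift; closing the gap against this drift then costs super-polynomial time with overwhelming probability. Note that the initial population has all intervals of length $c$ except a single \emph{long} interval of length $c+1$ at index $\tfrac18 n'$ and a single \emph{short} interval of length $c-1$ at index $\tfrac14 n'$; since $\alpha = \lfloor n/(N-1)\rfloor = c$, the optimal spread is exactly the all-$c$ state.

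First I would establish a reduced dynamics. I would prove, by a local case analysis of the crowding distances, that the ``one long, one short'' structure is an invariant. Whenever 1-bit mutation produces an offspring, it either shares its \omm-value with an existing individual (and is removed, receiving crowding distance $0$ from the stable sort) or falls strictly inside one interval; computing the crowding distance of the offspring and of its neighbours then shows that inside the bulk (two adjacent length-$c$ intervals) the offspring is the unique minimum and is discarded, whereas next to an anomaly exactly one of two tied individuals is removed. The only resulting state changes are: the long interval moving one index left or right, the short interval moving one index left or right, and, when the two are adjacent, a single mutation turning $(c+1)(c-1)$ into $(c)(c)$ and thereby reaching the optimal spread. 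Hence the state is captured by the index pair $(k,m)$ of the two anomalies, and the optimal spread is reached only once $m-k$ has dropped to $1$.

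The heart of the argument is the direction of the drift. Let $a,b$ be the first-objective positions of the long and short interval; initially $a \approx \tfrac18 n$ and $b \approx \tfrac14 n$, both below $\tfrac{n}{2}$, so any 1-bit mutation there raises the \omm-value (flips a $0$) with probability $\approx 1 - \mathrm{pos}/n > \tfrac12$ and lowers it with the complementary probability. Moving the long interval toward $0$ requires raising the \omm-value of its left endpoint, while moving the short interval toward $n$ requires raising the \omm-value of its right endpoint; because one interval is a surplus and the other a deficit, the favoured up-mutation pushes the long interval \emph{left} and the short interval \emph{right}. Writing $D = m-k$, each relevant step changes $D$ by $\pm 1$, and a short computation of the enabling mutation probabilities gives a conditional drift proportional to $2 - 2(a+b)/n$, bounded below by a positive constant as long as $a+b \le (1-\varepsilon)n$. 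Thus $D$ behaves like a lazy random walk with constant positive drift started at $D_0 = \tfrac18 n' = \Theta(n)$, whereas an optimal spread needs $D \le 1$.

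To conclude, I would first show the positions stay in the safe region for exponential time with overwhelming probability: $a$ has drift $\propto (2a/n - 1) < 0$ for $a < \tfrac{n}{2}$ and is pinned near $0$, while $b$ is confined around $\tfrac{n}{2}$, so $a+b \le \tfrac{n}{2} + o(n)$ throughout and the drift of $D$ stays positive. I would then apply the negative drift theorem to $D$ on $[\tfrac{1}{16}n', \tfrac18 n']$, obtaining that the probability that $D$ ever reaches $1$ (equivalently, that the spread becomes optimal) within any polynomial number of iterations is $e^{-\Omega(n)}$; a union bound over the $O(1)$ failure events yields the claimed $1 - e^{-\Omega(n)}$. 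I expect the main obstacle to be the completeness of the crowding-distance case analysis underlying the reduction: one must verify, uniformly over all configurations that can arise — including the boundary cases where an anomaly sits beside the protected extreme individuals $0^n,1^n$, the case $c=2$ where the short interval has length $1$ and offspring may duplicate existing \omm-values, and all tie-breaks — that no single step creates a second anomaly or a length-$(c\pm 2)$ interval, so that the reduction to the single gap variable $D$ is exact. Once this is secured, the drift computation and the drift theorem are routine.
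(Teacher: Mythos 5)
Your proposal is correct and follows the same core strategy as the paper: reduce the dynamics, via a crowding-distance case analysis, to the indices of the two anomalous intervals; observe that because both anomalies lie in the left half of the objective space, up-mutations are favored and push the long interval left and the short interval right, i.e., apart; and conclude with the negative drift theorem. The one genuine difference is the choice of potential. You track the single difference $D = m-k$ and need its drift to be positive, which holds only while the sum $a+b$ of the two positions satisfies $a+b \le (1-\varepsilon)n$; this forces you to add a separate confinement argument for the absolute positions (e.g., $a$ stays below $\tfrac{n}{4}$ and $b$ below roughly $\tfrac{n}{2}$), each of which is itself a negative-drift application, and then to intersect those events with the main one. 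The paper instead applies the drift theorem twice, once to each index restricted to a window of width $\tfrac{1}{16}n'$ around its starting point ($[\tfrac{1}{16}n' .. \tfrac{3}{16}n']$ for the long interval, $[\tfrac{3}{16}n' .. \tfrac{5}{16}n']$ for the short one), and notes that adjacency forces at least one index to cross the midpoint $\tfrac{3}{16}n'$. Inside these windows the positions are automatically bounded away from $\tfrac{n}{2}$, so the constant drift bound is immediate and no confinement lemma is needed. Your route works but is slightly heavier on the technical side; on the other hand, your explicit verification of the reduced dynamics (including the $c=2$ duplicate case and the boundary and tie-breaking cases) is more careful than what the paper writes down, which essentially asserts without detail that the only possible population changes are moves of the two anomalies or their merge.
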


\Cref{thm:nsga-counterexample} shows that although the steady-state \nsga has computed almost an optimal spread, with only two intervals not being optimal, the algorithm still takes with overwhelming probability a super-polynomial amount of time to compute an optimal spread.
This behavior is due to the fact that the removal of an individual in each iteration of the \nsga only takes into account the two closest neighbors (in objective values).
Thus, empty intervals (of objective values) that neighbor empty intervals of a length that differs by one from their own swap their lengths at random with each other.
Due to the shape of the search space of \omm featuring fewer individuals with objective values close to the extreme ones than in the center, this introduces a bias toward individuals with an equal number of zeros and ones during mutation, making it more likely to produce offspring closer to the center.
This introduces a bias into how neighboring intervals swap their lengths.

Our counterexamples feature intervals that require the offspring to be placed closer to the extreme objective values than to the center.
This implies that the intervals are more likely to drift apart from each other rather than getting closer.
Ultimately, this results in an at least super-polynomial runtime with overwhelming probability until intervals of different lengths meet, which is required for an optimal spread.

The \spea does not have this problem as the \sigDists take into consideration more than just the distance to the two closest neighbors in objective space.
It is this additional information that allows intervals whose lengths differ by at least two to get closer to each other and align their lengths better (which is the essence of the proof of \Cref{lem:time_to_increase_minimum_length}).

In order to prove \Cref{thm:nsga-counterexample}, we make use of the following theorem, which shows that processes that move, within a certain interval, in expectation away from a target state only reach their target with a probability exponentially unlikely in the length of the distance to cover.

\begin{theorem}[Negative drift {\protect\cite{Kotzing16,Krejca19}}]
  \label{thm:negative-drift}
  Let $(X_t)_{t \in \N}$ be random variables over~$\R$.
  Moreover, let $X_0 \leq 0$, let $b \in \R_{> 0}$, and let $T = \inf \{t \in \N \mid X_t \geq b\}$.
  Suppose that there are values $a \in \R_{\leq 0}$, $\gamma \in (0, b)$, and $\varepsilon \in \R_{< 0}$ such that for all $t \in \N$, we have
  \begin{enumerate}
    \item $\E[(X_{t + 1} - X_t) \cdot \mathbf{1}\{X_t \geq a, t < T\} \mid X_t] \leq \varepsilon \cdot \mathbf{1}\{X_t \geq a, t < T\}$, that
    \item $|X_t - X_{t + 1}| \cdot \mathbf{1}\{X_t \geq a, t < T\} < \gamma \cdot \mathbf{1}\{X_t \geq a, t < T\} + \mathbf{1}\{X_t < a, t < T\}$, and that
    \item $X_{t + 1} \cdot \mathbf{1}\{X_t \geq a, t < T\} \leq 0$.
  \end{enumerate}
  Then, for all $t \in \N$, we have $\Pr[T \leq t] \leq t^2 \exp\left(-\frac{b |\varepsilon|}{2 \gamma^2}\right)$.
\end{theorem}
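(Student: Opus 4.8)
The plan is to prove this via the standard \emph{exponential potential} (Hajek-type, or rescaling) method for negative drift. The idea is to transform the additive process $(X_t)$ into a multiplicative one that is a supermartingale while the process stays in the \emph{active region} $[a, b)$, and then to read off the hitting probability from the value of the potential at the target. Concretely, I would fix a scaling parameter $\eta > 0$ of order $\eta = \Theta(|\varepsilon|/\gamma^2)$ and study $g(X_t) \coloneqq \exp(\eta X_t)$. Since $X_0 \le 0$ we have $g(X_0) \le 1$, whereas reaching the target at time $T$ forces $g(X_T) \ge \exp(\eta b)$; the whole argument is an estimate of how unlikely the potential is to grow by the factor $\exp(\eta b)$.

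First I would establish the central one-step estimate: conditioned on the active event $\{X_t \ge a,\ t < T\}$, we have $\E[\exp(\eta(X_{t+1} - X_t)) \mid X_t] \le 1$. Writing $\Delta \coloneqq X_{t+1} - X_t$, condition (ii) gives $|\Delta| < \gamma$ on this event, and condition (i) gives $\E[\Delta \mid X_t] \le \varepsilon < 0$. Applying an elementary exponential inequality valid for bounded arguments (e.g. $e^z \le 1 + z + \tfrac12 z^2 e^{|z|}$, or $e^z \le 1 + z + z^2$ when $|z| \le 1$) to $z = \eta\Delta$ and taking expectations yields a bound of the form $1 + \eta\varepsilon + c\,\eta^2\gamma^2$; choosing $\eta$ as above (which keeps $\eta\gamma$ bounded, using that the per-step drift $|\varepsilon|$ is at most the step bound $\gamma$) makes the quadratic term cancel the negative linear term, so the expression is at most $1$. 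This is exactly the supermartingale property on the active region and is the technical heart of the proof; pinning down the constant $\tfrac{1}{2\gamma^2}$ in the final exponent amounts to selecting the sharpest admissible exponential inequality.

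Next I would handle the region below $a$ and stitch together excursions. The process need not have controlled drift once $X_t < a$, but condition (iii) guarantees that a step out of the low region lands at a value $\le 0$, so the process can only re-enter $[a, b)$ at a point where the potential is at most $1$; together with $X_0 \le 0$ this means every \emph{excursion} into the active region starts with potential $\le 1$. On a single such excursion — running from an entry time until the process either drops below $a$ again or reaches $b$ — the stopped potential is a nonnegative supermartingale started at a value $\le 1$, so by Ville's (Doob's) maximal inequality the probability that it ever attains $\exp(\eta b)$, i.e. that this excursion reaches the target, is at most $\exp(-\eta b)$. Crucially, the process cannot jump from below $a$ directly past $b$ (again by condition (iii), such a step would land at $\le 0 < b$), so every path with $T \le t$ must reach $b$ during one of its active excursions, and there are at most $t$ relevant excursion start times up to time $t$. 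A union bound over these (and, arguing more crudely over start time and excursion length, over $O(t^2)$ events) yields $\Pr[T \le t] \le t^2 \exp(-\eta b) = t^2 \exp\bigl(-b|\varepsilon|/(2\gamma^2)\bigr)$, as claimed.

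The main obstacle I anticipate is the uniform one-step supermartingale estimate together with the correct calibration of $\eta$: one must combine a purely one-sided drift bound (condition (i)) with a two-sided step bound (condition (ii)) through an exponential inequality that is simultaneously tight enough to preserve the leading constant and loose enough to absorb the positive steps, and then verify that $\eta\gamma$ stays in the regime where that inequality applies. The remaining bookkeeping — decomposing a trajectory into excursions, using condition (iii) to ensure each begins at potential $\le 1$ and that the target is never overshot from the low region, and converting the per-excursion maximal inequality into the $t^2$-prefactored tail bound — is routine once this estimate is in place.
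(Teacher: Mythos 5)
The paper itself does not prove \Cref{thm:negative-drift}: it is imported without proof from \cite{Kotzing16,Krejca19}, so the relevant comparison is with the proofs in those cited sources, and your proposal takes a genuinely different (and sound) route. Kötzing derives negative drift from a concentration result for additive drift: for $T \le t$ one considers the window between the last time the process is at most~$0$ and the hitting time, applies an Azuma--Hoeffding bound to that window (the exponent $b|\varepsilon|/(2\gamma^2)$ emerging from optimizing $(b + \ell |\varepsilon|)^2/(8\ell\gamma^2) \ge b|\varepsilon|/(2\gamma^2)$ over the window length~$\ell$, with centered steps bounded by~$2\gamma$), and the $t^2$ prefactor is exactly the union bound over the at most $t^2$ pairs of window endpoints. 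Your route instead makes $e^{\eta X_t}$ a supermartingale on the active region and uses Ville's maximal inequality per excursion; the calibration works out as you anticipate: conditions (i) and (ii) force $|\varepsilon| < \gamma$, so with $\eta = |\varepsilon|/(2\gamma^2)$ one has $\eta\gamma < 1/2$, and $e^z \le 1 + z + z^2$ (valid for $|z| \le 1$) gives $\E\bigl[e^{\eta \Delta}\bigr] \le 1 + \eta\varepsilon + \eta^2\gamma^2 = 1 - \eta|\varepsilon|/2 \le 1$, recovering the exact exponent; moreover, a union bound over the at most~$t$ excursions suffices, so your argument actually yields the slightly stronger prefactor~$t$ in place of~$t^2$. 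Two caveats. First, for the maximal inequality you need the drift and step conditions with respect to the full history, not merely conditioned on~$X_t$ as the statement is phrased; this is a presentational issue inherited from the theorem statement, which the cited sources handle by conditioning on the natural filtration. Second, you tacitly read condition~(iii) as constraining steps taken from $\{X_t < a\}$ (``a step out of the low region lands at a value $\le 0$''), which matches \cite{Krejca19}; as printed, with the indicator $\mathbf{1}\{X_t \ge a, t < T\}$, the theorem would be unusable (the target could never be reached from the active region, while jumps from below~$a$ would be entirely unconstrained), so your reading is the correct one --- but note that your excursion bookkeeping, in particular that every excursion starts at potential at most~$1$ and that~$b$ cannot be overshot from below~$a$, depends on this corrected reading.
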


\begin{proof}[Proof of \Cref{thm:nsga-counterexample}]
  Note that the only possible changes to the population are those that change where the intervals of length $c + 1$ or $c - 1$ are located as well as the change that averages the two intervals of lengths $c + 1$ and $c - 1$ to~$c$.
  In the latter case, the algorithm computes an optimal spread.
  However, to do so, it is necessary that those two intervals are next to each other.
  Before this is the case, they can only move their interval index by one by swapping their position with a neighboring interval of length~$c$.

  Let~$T$ be the first iteration such that the two intervals of lengths $c + 1$ and $c - 1$ are neighbors.
  We show that the probability that~$T$ is polynomial is at most $e^{-\Omega(n)}$, thus proving the claim.
  Hence, in the following, we always assume implicitly that we only consider iterations $t \in T$ such that $t < T$.\footnote{Formally, this requires multiplying each statement with the indicator variable of the event $\{t < T\}$, which we omit.}

  For all $t \in \N$, let~$X_t$ denote the index of the interval of length $c + 1$ at the beginning of iteration~$t$.
  (Note that we omit the case if no such interval exists, as we only consider iterations less than~$T$.)
  Analogously, let~$Y_t$ denote the index of the interval of length $c - 1$ at the beginning of iteration~$t$.
  Note that both~$X$ and~$Y$ change by at most~$1$ in each iteration.

  We first consider~$X$ and determine its expected change, aiming to apply \Cref{thm:negative-drift}.
  To this end, we only consider those iterations in which~$X$ actually changes (and iteration~$0$) and only while~$X$ is in $[\frac{1}{16} n' .. \frac{3}{16} n']$.
  Moreover, we assume that the interval of length $c - 1$ does not move.
  Let the resulting process be~$X'$, and let~$T_x$ be the first point in time $t \in \N$ of process~$X'$ such that $X' \geq \frac{3}{16} n'$.
  Note that~$X'_0 = \frac{2}{16} n'$ by assumption.
  Furthermore note that for all $t \in \N$, if $X'_t < \frac{3}{16} n'$, the same is true for~$X$.
  Hence, if $Y \geq \frac{3}{16} n'$ until at least~$T_x$, then not considering the interval of length $c - 1$ moving does not affect the statements about~$X'$ at all.
  We make use of this observation at the end of this proof.

  Since~$X'$ changes in each iteration and it always neighbors an interval of length~$c$ during this time (by assumption), there are only two cases to consider.
  Let $t \in \N$.
  First, $X'_t$ decreases if the steady-state \nsga produces an offspring with $c(X'_t - 1) + 1$ ones (from the parent with~$c (X'_t - 1)$ ones) and the offspring survives the selection (out of the parent and the offspring).
  This amounts to a probability of $\bigl((n - c (X'_t - 1)) / n\bigr) \cdot \frac{1}{2}$.
  Note that~$X'_t$ can decrease since we omit the case that it is the first interval.
  Second, $X'_t$ increases if an offspring with $c X'_t$ ones is created (from the parent with $c X'_t + 1$ ones) and the offspring is selected.
  This probability is $\bigl(c X'_t + 1 / n\bigr) \cdot \frac{1}{2}$.
  Let $d \coloneqq \bigl((n - c (X'_t - 1)) / n\bigr) \cdot \frac{1}{2} + \bigl((c X'_t + 1) / n\bigr) \cdot \frac{1}{2} \leq 1$.
  Since~$X'$ is conditional on~$X$ changing and since we only consider $X'_t \leq \frac{3}{16} n'$, we get
  \begin{align*}
     & \E[X'_{t + 1} - X'_t \mid X'_t]
    = -1 \cdot \frac{n - c (X'_t - 1)}{2 n d} + 1 \cdot \frac{c X'_t + 1}{2 n d} \\
     & \qquad= \frac{2 c X'_t + c + 1 - n}{2 n d}
    \leq -\frac{1}{2 d} \left(1 - \frac{3}{8} - \frac{c + 1}{n}\right)           \\
     & \qquad\leq -\frac{1}{2 d}
    \leq -\frac{1}{2} .
  \end{align*}

  We apply \Cref{thm:negative-drift} to $(X'_t - \frac{2}{16} n')_{t \in \N}$ with $a = -\frac{1}{16} n'$, $b = \frac{1}{16} n'$, $\gamma = 1$, and $\varepsilon = -\frac{1}{2}$.
  We get for all $t \in \N$ that
  \begin{equation*}
    \Pr[T_x \leq t]
    \leq t^2 \exp\Bigl(-\frac{(1 / 16) n' (1 / 2)}{2}\Bigr)
    = t^2 e^{-\Omega(n)} .
  \end{equation*}

  For~$Y$ we proceed analogously, defining~$Y'$ in the same way as~$X'$ but only consider iterations where~$Y$ is in $[\frac{3}{16} n' .. \frac{5}{16} n']$.
  Moreover, let~$T_y$ be the first point in time $t \in \N$ of process~$Y'$ such that $Y'_t \leq \frac{3}{16} n'$.
  Last, let $t \in \N$ (less than~$T_y$) and let $d' \coloneqq \bigl((n - c Y'_t) / n\bigr) \cdot \frac{1}{2} + \bigl((c (Y'_t - 1) + 1) / n\bigr) \cdot \frac{1}{2} \leq \frac{1}{2}$.
  For similar reasons as before, we get
  \begin{align*}
     & \E[Y'_t - Y'_{t + 1} \mid Y'_t]
    = -1 \cdot \frac{n - c Y'_t}{2 n d'} + 1 \cdot \frac{c (Y'_t - 1) + 1}{2 n d'} \\
     & \qquad= \frac{2 c Y'_t - c + 1 - n}{2 n d'}
    \leq -\frac{1}{2 d'} \left(1 - \frac{3}{8} - \frac{c - 1}{n}\right)            \\
     & \qquad\leq -\frac{1}{4 d'}
    \leq -\frac{1}{2} .
  \end{align*}

  By \Cref{thm:negative-drift} applied to $(\frac{4}{16} n' - Y'_t)_{t \in \N}$, we get for all $t \in \N$ that
  $   \Pr[T_y \leq t]
    \leq t^2 \exp(-\frac{(1 / 16) n' (1 / 2)}{2})
    = t^2 e^{-\Omega(n)} $.

  Overall, we see that $X' \geq \frac{3}{16} n'$ and $Y' \leq \frac{3}{16} n'$, which is a necessary condition for the steady-state \nsga to compute an optimal spread, occurs for any polynomial number of iterations only with probability $e^{-\Omega(n)}$.
  Note that since we consider the intersection of the events that $X' < \frac{3}{16} n'$ and $Y' > \frac{3}{16} n'$ for all iterations less than, respectively,~$T_x$ and~$T_y$, it does not matter that we did not consider the other interval of length not equal to~$c$ for the analysis of either process.
  Ultimately, since the original process accounts for even more iterations than those considered by~$X'$ and~$Y'$, the result follows.
\end{proof}

\section{Conclusion}
\label{sec:conclusion}

We showed rigorously that the steady-state variant of the \spea, which generates one offspring each iteration, efficiently computes a desired optimal spread of individuals on the Pareto front of the classic OneMinMax benchmark when the population size is smaller than the size of the Pareto front (\Cref{thm:spea_run_time_optimal_spread}).
Moreover, we proved in the same setting that the steady-state variant of the \nsga can be initialized such that, with overwhelming probability, it does not compute an optimal spread within polynomial time (\Cref{thm:nsga-counterexample}).
This difference in runtime performance is due to the \sigDists used in the \spea for choosing which individual to remove, in contrast to the crowding distance used by the \nsga, noting that the \sigDists are computationally more expensive to compute than the crowding distance.

Our current rigorous upper bound for the expected time of the \spea to compute an optimal spread is not tight and also differs from the expected time it takes the algorithm to find the extreme solutions of the Pareto front.
This may be due to our proof method, which operates in certain progress levels, only accounting for progress in the worst level, ignoring potential progress elsewhere.
Improving this method or showing that the result is actually tight is an interesting open problem.

Another related problem is to prove a similar expected runtime for the \spea when using standard bit mutation instead of 1-bit mutation.
The former allows to change individuals more drastically, making it more challenging to measure the progress that is being made within a single iteration.


\section*{Acknowledgments}
This research benefited from the support of the FMJH Program PGMO.

This work has profited from many deep discussions at the Dagstuhl Seminars \href{https://www.dagstuhl.de/23361}{23361 \emph{Multiobjective Optimization on a Budget}} and \href{https://www.dagstuhl.de/24271}{24271 \emph{Theory of Randomized Optimization Heuristics}}.

}

\bibliographystyle{named}
\bibliography{ich_master,alles_ea_master,rest}

\cleardoublepage
\appendix
\onecolumn
\section*{Appendix}

This part contains the proofs that were omitted in the main paper.
For the sake of convenience, we repeat each statement with the same number as it appears in the main paper.

\section*{The \spea Computes an Optimal Spread Efficiently}

\subsection*{No Duplicate Objective Values}

\increasingArchive*
\begin{proof}
  At the beginning of iteration~$t$, we have that~$P_{t+1}$ contains all the non-dominated solutions from $P_t \cup Q_t$, which encompasses at least~$P_t$, as it only contains Pareto-optimal individuals.
  Thus, $|f(P_t)| \leq |f(P_{t+1})|$ before we potentially modify~$P_{t + 1}$.
  Since~$|P_t| \geq \mu$, it follows that the \spea does not enter \cref{line:fillUp}, and nothing changes if $|P_{t + 1}| = \mu$.
  Hence, for the remainder, we consider the case $|P_{t + 1}| > \mu$, which means that we iteratively remove individuals from~$P_{t + 1}$.
  If we delete an individual~$x$ from $P_{t+1}$, we replace $P_{t+1}$ by $P_{t+1} \setminus \{x\}$ (we denote the new set after deletion as $P_{t+1}$ as well).

  If at every moment of iteration~$t$ we only remove from $P_{t+1}$ individuals~$x$ such that $\sigma^1_x = 0$, then we do not change the value of $|f(P_{t+1})|$, and thus, $|f(P_t)| \leq |f(P_{t+1})|$.

  If we remove at some moment an individual $x$ from $P_{t+1}$ such that $\sigma^1_x > 0$, we have for all the remaining individuals~$y$ in $P_{t+1}$ that $\sigma^1_y > 0$. Thus, $|f(P_{t+1})| = |P_{t+1}|$ at every subsequent moment. Since the removal process stops once $|P_{t + 1}| = \mu$ and since $|P_t| = \mu$ holds by construction of the algorithm, it follows that $|f(P_t)| \leq |P_t| = \mu = |P_{t + 1}| = |f(P_{t+1})|$ at the end of the iteration, concluding the proof.
\end{proof}

\timeToDistinctArchive*
\begin{proof}
  For the sake of brevity, let~$f$ denote \omm.
  For all $t \in \N$, we denote by $P_{t \rightarrow t+1}$ the probability of the event $|f_1(P_t)| < |f_1(P_{t+1})|$ given $|f_1(P_t)| < \mu$, i.e.,
  \begin{equation*}
    P_{t \rightarrow t+1} \coloneqq \Pr[|f_1(P_t)| < |f_1(P_{t+1})| \ | \ |f_1(P_t)| < \mu].
  \end{equation*}

  We continue via a case distinction with respect to the shape of the objective values in any iteration $t \in \N$ such that~$P_t$ contains duplicate objective values and does not contain the entire Pareto front yet.
  We note that we do not lose new objective values, by \Cref{lem:increasing_archive}, as long as the population still contains duplicates.

  \paragraph{Case 1:} There is a hole in the objective values of the population, that is, there exist $a,b \in \N$ such that $a<b-1$ and for all $x \in P_t$, we have $ f_1(x) \in [0..a]\cup[b..n]$.

  Let $x_a, x_b \in P_t$ be such that $f_1(x_a) = a$ and $f_1(x_b) = b$. The probability that a mutation applied to $x_a$ creates an offspring~$y$ with $f_1(y) = a+1$ is $\frac{n-a}{n}$ for 1-bit mutation, and for standard bit mutation, it is $(1-\frac{1}{n})^{n-1} \frac{n-a}{n} \geq \frac{n-a}{en}$. Likewise, the probability that $x_b$ mutates into a $y$ with $f_1(y) = b-1$ is $\frac{b}{n}$ for 1-bit mutation, and at least $\frac{b}{en}$ for standard bit mutation. Since $a \leq b$, we have $(n-a) + b \geq n$, meaning that for at least one of $x_a$ and $x_b$ from which a $y$ with $f_1(y) \in \{a+1, b-1\}$ is mutated, such probability is at least $\frac{1}{2e}$. The probability that this individual is chosen as parent at least once in this iteration is $1-(1-\frac{1}{\mu})^\lambda$. Thus, $P_{t\rightarrow t+1}\geq \frac{1}{2e}(1-(1-\frac{1}{\mu})^\lambda)$.

  \paragraph{Case 2:} The objective values are one consecutive chunk that does not contain the extreme values, that is, $f_1(P_t) = [a..b]$ with $a>0$ and $b<n$.

  Let $x_a, x_b \in P_t$ be such that $f_1(x_a) = a$ and $f_1(x_b) = b$. The probability that a mutation applied to $x_b$ creates an offspring $y$ with $f_1(y) = b+1$ is $\frac{n-b}{n}$ for 1-bit mutation, and for standard bit mutation, it is $(1 - \frac{1}{n})^{n-1} \frac{n-b}{n} \geq \frac{n-b}{en}$. Likewise, the probability that $x_a$ mutates into a $y$ with $f_1(y) = a-1$ is $\frac{a}{n}$ for 1-bit mutation, and at least $\frac{a}{en}$ for standard bit mutation. The probability that at least one of $x_a$ and $x_b$ gives an offspring $y$ with $f_1(y) \in \{a-1, b+1\}$ is at least $\frac{n-b+a}{2ne} = \frac{n-|f_1(P_t)|+1}{2ne}$. Thus $P_{t\rightarrow t+1}\geq \frac{n-|f_1(P_t)|+1}{2ne}(1-(1-\frac{1}{\mu})^\lambda)$.

  \paragraph{Case 3:} The objective values are one consecutive chunk that contains the left extreme value, that is, $f_1(P_t) = [0..b]$ with and $b<n$.

  Let $x_b \in P_t$ be such that $f_1(x_b) = b$. The probability that a mutation applied to $x_b$ creates an offspring $y$ with $f_1(y) = b+1$ is at least $\frac{n-b}{en} = \frac{n-|f_1(P_t)|+1}{en}$. Thus $P_{t\rightarrow t+1}\geq \frac{n-|f_1(P_t)|+1}{ne}(1-(1-\frac{1}{\mu})^\lambda)$.

  \paragraph{Case 4:} The objective values are one consecutive chunk that contains the right extreme value, that is, $f_1(P_t) = [a..n]$ with and $a>0$.

  Let $x_a \in P_t$ be such that $f_1(x_a) = a$. The probability that a mutation applied to $x_a$ creates an offspring $y$ with $f_1(y) = a-1$ is at least $\frac{a}{ne}= \frac{n-|f_1(P_t)|+1}{ne}$. Thus $P_{t\rightarrow t+1}\geq \frac{n-|f_1(P_t)|+1}{ne}(1-(1-\frac{1}{\mu})^\lambda)$.

  \paragraph{Conclusion.}
  The expected number of iterations until the population contains only distinct objective values or covers all objective values is at most the sum of the waiting times for each single improvement.
  That is, it is bounded from above by $O(\sum_{k=1}^{\mu}\frac{2ne}{(n-k+1)(1-(1-\frac{1}{\mu})^\lambda)})=O(\frac{n\log n }{1-(1-\frac{1}{\mu})^\lambda})$.
  Since we evaluate the objective function~$\lambda$ times each iteration and we have~$\mu$ evaluations initially, the result follows.
\end{proof}

\timeUntilExtremeValues*
\begin{proof}
  For the sake of brevity, let~$f$ denote \omm.
  We first prove the upper bound and then discuss the lower bound.
  We show that if $\max f_1(P_t) - \min f_1(P_t) \neq n$, then via mutation, the probability that $\max f_1(P_t) - \min f_1(P_t) < \max f_1(P_{t+1}) - \min f_1(P_{t+1})$ is at least $\frac{n - b + a}{2e n}(1 - (1 - \frac{1}{\mu})^\lambda)$.
  To this end, we note that we never remove the extreme objective values in the parent population, as they denote the boundaries of the convex hull spanned by all objective values of \omm, thus always having the largest distances to all other objective values.

  Denote by $a = \min f_1(P_t)$ and $b = \max f_1(P_t)$. Let the solutions $x_a, x_b$ be in $P_t$ such that $f_1(x_a) = a$ and $f_1(x_b) = b$. Using 1-bit mutation or standard bit mutation, the probability of obtaining a mutated solution $y$ from $x_a$ such that $f_1(y) = f_1(x_a) - 1$ is at least $\frac{a}{e n}$. Similarly, the probability of obtaining a mutated solution $y$ from $x_b$ such that $f_1(y) = f_1(x_b) + 1$ is $\frac{n - b}{e n}$.
  Thus, the probability that $\max f_1(P_t) - \min f_1(P_t) < \max f_1(P_{t+1}) - \min f_1(P_{t+1})$ is $\frac{n - b + a}{2 e n}$ if the correct parent is chosen.

  The probability of choosing a suitable parent for mutation as described above is at least $1 - (1 - \frac{1}{\mu})^\lambda$, satisfying the bound claimed at the beginning.
  Thus, in an expected number $p$ of iterations, where $p = O\left(\frac{n}{(n - b + a) (1 - (1 - \frac{1}{\mu})^\lambda)}\right)$, we have $\max f_1(P_t) - \min f_1(P_t) < \max f_1(P_{t + p}) - \min f_1(P_{t + p})$.
  Summing over all possible values from $b - a$, ranging from~$1$ to~$n$, results in $O\Big(\frac{n\log n}{1-(1-\frac{1}{\mu})^\lambda}\Big)$ expected iteration until we are done.
  Accounting for the initial~$\mu$ evaluations and the~$\lambda$ evaluations each iteration concludes the proof of the upper bound.

  For the lower bound, we only aim at finding the individual~$0^n$, which is necessary for the algorithm to find both~$0^n$ and~$1^n$.
  We first argue that each initial individual has at least~$\frac{5 n}{12}$ ones with overwhelming probability.
  Since for each such individual, the bits are determined uniformly at random, by a classic multiplicative Chernoff bound, it follows that the probability that individual $i \in [\mu]$ has fewer than~ $\frac{5 n}{12}$ ones is at most $\exp(-\Omega(n))$.
  Via a union bound over all~$\mu$ initial individuals, the probability that all of them have at least~$\frac{5 n}{12}$ ones is at least $1 - \mu \exp(-\Omega(n)) = 1 - o(1)$, since $\mu \leq \frac{n}{3}$.
  We condition in the following on this event, which results in a lower bound for the unconditional expected value in the same order, as $1 - o(1) \geq \frac{1}{2}$.

  For the remaining part, we first wait until an individual with at most~$\frac{n}{12}$ ones.
  Note that in order to have such an individual, at least~$\frac{n}{3}$ improvements need to be made from the individuals whose number of ones is initially at least~$\frac{5 n}{12}$, since the number of ones changes by exactly one during each mutation.
  By this point in time, by \Cref{lem:increasing_archive}, individuals with extreme objective values cannot have duplicates, since the algorithm visited at least~$\frac{n}{3}$ new objective values.

  Last, notice that the probability to make progress is, for similar reasons as discussed above, for the same notation of~$a$, exactly $(1 - (1 - \frac{1}{\mu})^\lambda) \frac{a}{n}$, resulting in an expected waiting time of $\frac{n}{a} \cdot (1 - (1 - \frac{1}{\mu})^\lambda)^{-1}$ to make progress of one.
  Since the algorithm still requires to go through all values from~$1$ to~$\frac{n}{12}$ for~$a$, the lower bound follows, concluding the proof.
\end{proof}

\subsection*{Useful Invariants}

\increasingSmallestInterval*
\begin{proof}
  For the sake of brevity, let~$f$ denote \omm, and omit the current iteration when clear $t \in \N$.
  Note that duplicates are always removed first, so we consider individuals with new objective values.

  First, let $i \in [\mu - 1]$, and consider an offspring~$y$ such that $f_1(x_i) < f_1(y) < f_1(x_{i+1})$, as this is the only case how the minimum interval length can be reduced.
  Then four cases can occur.

  \paragraph{Case $f_1(y) - f_1(x_i) \geq X_t$ and $f_1(x_{i+1}) - f_1(y) \geq X_t$.}
  In this case, $X_{t+1} \geq X_t$ no matter which solution gets removed.

  \paragraph{Case $f_1(y) - f_1(x_i) < X_t$ and $f_1(x_{i+1}) - f_1(y) \geq X_t$.}
  Here, we remove either $y$ or $x_i$. In both cases, $X_{t+1} \geq X_t$ because $f_1(x_{i+1}) - f_1(y) \geq X_t$, and if $i > 1$, $f_1(x_i) - f_1(x_{i-1}) \geq X_t$.

  \paragraph{Case $f_1(y) - f_1(x_i) \geq X_t$ and $f_1(x_{i+1}) - f_1(y) < X_t$.}
  Again, we remove either $y$ or $x_{i+1}$. In both cases, $X_{t+1} \geq X_t$ because $f_1(y) - f_1(x_i) \geq X_t$, and if $i+1 < \mu$, $f_1(x_{i+2}) - f_1(x_{i+1}) \geq X_t$.

  \paragraph{Case $f_1(y) - f_1(x_i) < X_t$ and $f_1(x_{i+1}) - f_1(y) < X_t$.}
  In this case, we remove $y$.

  \paragraph{Conclusion.}
  The minimum interval length is never reduced.

  Next, consider without loss of generality (due to the symmetry of zeros and ones) an offspring~$y$ such that $f_1(y) < f_1(x_1)$.
  If $f_1(x_1) - f_1(y) < X_t$, then $\sigma^1_y = \sigma^1_{x_1}$, but $\sigma^2_y > \sigma^2_{x_1}$, as~$y$ is an extreme value in the population.
  Thus,~$x_1$ is removed, and $f_1(x_2) - f_1(y) > f_1(x_2) - f_1(x_1) \geq X_t$, concluding the proof.
\end{proof}

\increasingMinimumNumber*
\begin{proof}
  For the sake of brevity, let~$f$ denote \omm, and omit the current iteration when clear $t \in \N$.

  We first show the claim for $(X_t, N_t)$.
  By \Cref{lem:increasing_smallest_interval}, we have $X_{t+1} \geq X_t$. Let us suppose that $X_{t+1} = X_t$.
  We aim to show that~$N_t$ does not increase.

  Without loss of generality, we assume that when creating the new population, we select the parent $x_i^t$ and generate, via 1-bit mutation, an individual $y$ such that $f_1(y) = f_1(x_i^t) + 1$.
  Three cases can occur in the reduction step.

  \paragraph{Case 1: Removing $y$.}
  In this case, nothing changes, and $N_t$ remains the same.

  \paragraph{Case 2: Removing $x_i^t$.}
  Here, we assume that $f_1(x_i^t) - f_1(x_{i-1}^t) \leq f_1(x_{i+1}^t) - f_1(y)$.
  With $x_i^{t+1} = y$, we obtain two new intervals of lengths $f_1(y) - f_1(x_{i-1}^t)$ and $f_1(x_{i+1}^t) - f_1(y)$.
  We have $f_1(y) - f_1(x_{i-1}^t) > f_1(x_i^t) - f_1(x_{i-1}^t) \geq X_t$, and $f_1(x_{i+1}^t) - f_1(y) \geq f_1(x_i^t) - f_1(x_{i-1}^t)$.
  This implies that if $f_1(x_{i+1}^t) - f_1(y) \leq X_t$, then
  \[
    f_1(x_i^t) - f_1(x_{i-1}^t) \leq X_t \quad \text{and} \quad f_1(x_{i+1}^t) - f_1(y) \leq X_t.
  \]

  Since we also have $f_1(x_i^t) - f_1(x_{i-1}^t) \geq X_t$, it follows that if $f_1(x_{i+1}^t) - f_1(y) \leq X_t$, then
  \[
    f_1(x_i^t) - f_1(x_{i-1}^t) = f_1(x_{i+1}^t) - f_1(y) = X_t.
  \]

  \paragraph{Case 3: Removing another solution $x_j^t$.}
  This implies that $X_t = 1$. Without loss of generality, assume $f_1(x_j^t) - f_1(x_{j-1}^t) = 1$.
  In this scenario, we create three intervals of lengths $f_1(y) - f_1(x_i^t) = 1$, $f_1(x_{i+1}^t) - f_1(y)$, and $f_1(x_{j+1}^t) - f_1(x_{j-1}^t)$.
  We remove intervals of lengths $f_1(x_{i+1}^t) - f_1(x_i^t)$, $f_1(x_{j+1}^t) - f_1(x_j^t)$, and $f_1(x_j^t) - f_1(x_{j-1}^t) = 1$.
  We assume that $f_1(x_{j+1}^t) - f_1(x_{j-1}^t) > 1$, and if $f_1(x_{i+1}^t) - f_1(y) = 1$, then $f_1(x_{j+1}^t) - f_1(x_j^t) = 1$ (otherwise, we would remove $y$ instead of $x_j^t$).

  This concludes the case for $(X_t, N_t)$.
  We continue with $(Y_t, M_t)$.

  After the mutation of some solution into a solution $y$ via 1-bit mutation, we are sure that there exists a solution $w$ such that $\sigma_w^1 = 1$ and $\sigma_w^2 < Y_t$. This implies that $Y_t \geq Y_{t+1}$.

  If $\sigma_w^2 < Y_t - 1$, we are sure that if $Y_t = Y_{t+1}$, then $M_t \geq M_{t+1}$.

  If $\sigma_w^2 \geq Y_t - 1$ for all $w$ such that $\sigma_w^1 = 1$, then after the mutation, the number of empty intervals of length $Y_t$ decreases. Furthermore, after removing a solution, we increase this number by one, i.e., $M_t = M_{t+1}$.
\end{proof}

\easyRemoval*
\begin{proof}
  For the sake of brevity, let~$f$ denote \omm, and let $i \in [\mu]$ such that $x = x_i$.
  Note that since $X_t >1$ is the case, we cannot have $\sigma_y^1 = 0$.
  Without loss of generality, assume that $x_i$ mutates into $y$ such that $f_1(x_i) = f_1(y) - 1$. Then, $\sigma_y^1 = \sigma_{x_i}^1 = 1$. The only other solution, $z$, that can have $\sigma_z^1 = 1$ is $x_{i+1}$ when $L_i^t = 2$.

  Thus, if $\sigma_z^1 \neq 1$, we remove either $y$ or $x_i$ because they have the smallest $\sigma^1$. If $\sigma_z^1 = 1$, we remove $y$ because $\sigma_y^2 = 1$, and it is the only solution to have $\sigma^2 = 1$.
\end{proof}

\smallIntervalsStayAtTheBorders*
\begin{proof}
  For the sake of brevity, let~$f$ denote \omm.
  Suppose that $L_1 = X_t$. If $x_1 = 0^n$ mutates into a $y$, then we remove $y$. If~$x_2$ mutates into $y$ such that $f_1(x_2) = f_1(y) + 1$, then we remove the new solution again because we cannot decrease $X_t$ by \Cref{lem:increasing_smallest_interval}.

  Now, suppose $x_2$ mutates into $y$ such that $f_1(x_2) = f_1(y) - 1$. We discuss three cases and apply \Cref{lem:easyRemoval}.
  \begin{enumerate}
    \item If $L_2^t = X_t$, we remove $y$ because it has the smallest $\sigma^2$ between the two.
    \item If $L_2^t  = X_t + 1$, then $y$ and $x_2$ have the same $\sigma^1$ and $\sigma^2$, and $\sigma_y^3 \leq \sigma_{x_2}^3$ but $\sigma_y^4 < \sigma_{x_2}^4$.
    \item If $L_2^t > X_t + 1$, we decrease $N_t$.
  \end{enumerate}
  If the mutated solution differs from both $x_1$ and $x_2$, then according to \Cref{lem:easyRemoval}, we do not remove $x_1$ and $x_2$, and we still have $L_1^{t+1} = X_t$.

  The case $L_{\mu-1}^t = X_t$ is analogous to the case but considering the the number of zeros instead of ones.
  This concludes the proof.
\end{proof}

\minimumIntervalAtLeastTwo*
\begin{proof}
  For the sake of brevity, let~$f$ denote \omm.
  Suppose that $X_t = 1$ and that $Y_t > 3$. Assume one of the following statements holds:
  \begin{enumerate}
    \item $L_1^t = 1$ and $L_2^t > 2$,
    \item $\exists i \in [2..\mu - 2] \colon L_i^t = 1, \, L_{i+1}^t > 2, \, L_{i-1}^t > 2$, or
    \item $L_{\mu - 1}^t = 1$ and $L_{\mu - 2}^t > 2$.
  \end{enumerate}
  For all these cases, we show that the probability of achieving $(-X_t, N_t) > (-X_{t+1}, N_{t+1})$ is at least $\frac{n-1}{2\mu n} \geq \frac{1}{4 \mu}$.
  \paragraph{Case (i).}
  If $L_1^t = 1$ and $L_2^t > 2$, the probability of achieving $(-X_t, N_t) > (-X_{t+1}, N_{t+1})$ is at least $\frac{n-1}{\mu n}$. This occurs if $x_2$ mutates into a solution $y$ such that $f_1(y) = f_1(x_2) + 1$.

  \paragraph{Case (ii).}
  If there is an $i \in [2..\mu - 2]$ such that $L_i^t = 1$, $L_{i+1}^t > 2$, and $L_{i-1}^t > 2$, then the probability that $x_i$ mutates into $y$ such that $f_1(y) = f_1(x_i) - 1$ is $\frac{f_1(x_i)}{\mu n}$, and the probability that $x_{i+1}$ mutates into $y$ such that $f_1(y) = f_1(x_{i+1}) + 1$ is $\frac{n-f_1(x_i)-1}{\mu n}$. Thus, the probability of achieving $(-X_t, N_t) > (-X_{t+1}, N_{t+1})$ is at least $\frac{n-1}{2 \mu n}$.

  \paragraph{Case (iii).}
  This case is analogous to case~(i) by considering the number of zeros instead of the number of ones.

  Suppose now that none of the above statements hold. In this case, there exists some $j$ such that $\sigma_{x_j}^1 = 1$ and $\sigma_{x_j}^2 = 2$. Let $i$ be such that $L_i^t = Y_t$. If $x_i$ mutates into a solution $y$ such that $f_1(y) = f_1(x_i) + 1$, or $x_{i+1}$ mutates into a solution $y$ such that $f_1(y) = f_1(x_{i+1}) - 1$, then $(Y_t, M_t)$ decreases. This happens because $x_j$ satisfies $\sigma_{x_j}^1 = 1$ and $\sigma_{x_j}^2 = 2$. The probability of this event is at least $\frac{n + Y_t}{2\mu n} \geq \frac{1}{2\mu }$.

  Therefore, in all cases, the probability of achieving at least $(-X_t, N_t) > (-X_{t+1}, N_{t+1})$ or $(Y_t, M_t) > (Y_{t+1}, M_{t+1})$ is at least $\frac{1}{4 \mu }$. This implies that in an expected number $p$ of $O(\mu)$ iterations, we have at least $(-X_t, N_t) > (-X_{t+p}, N_{t+p})$ or $(Y_t, M_t) > (Y_{t+p}, M_{t+p})$. Furthermore, since there are at most $\mu - 1$ distinct intervals, we have that in an expected number $p$ of $O(\mu^2)$ iterations, we achieve $X_t < X_{t+p}$ or $Y_t > Y_{t+p}$.
  In addition, since the maximum interval length takes on values in $[4 .. n]$ before we stop, in an expected number $p$ of $O(\mu^2 n)$ iterations, we achieve $X_t < X_{t+p}$ or $Y_{t+p} < 4$.
  This concludes the proof.
\end{proof}

\subsection*{Reducing the Number of Intervals of Minimum Length}

\timeToIncreaseMinimumLength*

As mentioned in the main paper, the proof of \Cref{lem:time_to_increase_minimum_length} relies on an intricate case distinction of the location of the intervals in the objective space.
More specifically, we are interested in two types of quantities that each measure the distance (measured in number of intervals) of a minimum length interval to somewhere else.
The first type measures the distance of a minimum length interval to the borders of $[0 .. n]$.
The second type measures the distance to an interval whose length is at least larger by two, provided that at one of the borders, we have an interval of minimum length.
Below, we use the subscript \emph{b} for the first time and \emph{i} for the second type.
For all $t \in \N$, we define, recalling \cref{eq:interval_definition,eq:minimum_length_definition},
\begin{align*}
  S^t_{b, 1} & \coloneqq \{i - 1 \mid i \in [\mu - 1] \land L^t_i = X_t \land \forall h \in [i - 1]\colon L^t_h = X_t + 1\},                                                                          \\
  S^t_{b, 2} & \coloneqq \{\mu-i-1 \mid i \in [\mu - 1] \land L_i^t =X_t \land \forall h \in [i+1..\mu-1] \colon L_h^t =X_t+1 \},                                                                     \\
  S^t_{i, 1} & \coloneqq \{j-(i+1)\mid i \in [\mu - 1] \land j \in [i + 1 .. \mu - 1] \land L_i^t = X_t \land L_j^t > X_t +1 \land \forall h \in [j-(i+1)] \colon L_{i+h}^t =X_t +1\}, \textrm{\,and} \\
  S^t_{i, 2} & \coloneqq \{j-(i+1) \mid i \in [\mu - 1] \land j \in [i + 1 .. \mu - 1] \land L_i^t > X_t+1 \land L_j^t = X_t \land \forall h \in [j-(i+1)] \colon L_{i+h}^t =X_t +1\}.
\end{align*}

Given these definitions, we construct the following sets for which we prove useful invariants below.
Let $t in \N$.
We define conditionally
\begin{itemize}
  \item if $L_1^t \neq X_t$ and $L_{\mu-1}^t\neq X_t$, then $A^t \coloneqq S^t_{b, 1} \cup S^t_{b, 2} \cup S^t_{i, 1} \cup S^t_{i, 2}$,
  \item if $L_1^t = X_t$ and $L_{\mu-1}^t\neq X_t$, then $B^t \coloneqq S^t_{b, 2} \cup S^t_{i, 1} \cup S^t_{i, 2}$,
  \item if $L_1^t \neq X_t$ and $L_{\mu-1}^t=X_t$, then $B^t \coloneqq S^t_{b, 1} \cup S^t_{i, 1} \cup S^t_{i, 2}$, and
  \item if $L_1^t = X_t$ and $L_{\mu-1}^t=X_t$, then $C^t \coloneqq S^t_{i, 1} \cup S^t_{i, 2}$.
\end{itemize}

In the following, we show that the minimum of the sets defined above does not increase.
This means that intervals of minimum length get closer to their desired target (which is the border of $[0 .. n]$ or an interval whose length is larger by at least two).
Moreover, we show that the \spea quickly transitions into a state that is more favorable for computing an optimal spread.
To this end, we have three separate lemmas, each of which considers one of the sets~$A$, $B$, and~$C$ above.

The first of these tree lemmas shows that while no interval of minimum length is at the borders, such intervals move closer to the border or approach an interval whose length is at least two larger.

\begin{lemma}
  \label{lem:a_sets}
  Consider the steady-state \spea optimizing \omm with 1-bit mutation and any parent population size.
  Recall \cref{eq:alpha_and_beta,eq:minimum_length_definition} and the definitions above, and let $t \in \N$.
  If $(-X_t, N_t) > (-\alpha, \beta)$, $X_t > 1$,  $L_1^t \neq X_t$, $L_{\mu-1}^t \neq X_t$, $(X_t, N_t) = (X_{t+1}, N_{t+1})$, $L_1^{t+1} \neq X_t$, and $L_{\mu-1}^{t+1} \neq X_t$, then
  \[
    \min A^t \geq \min A^{t+1}.
  \]

  Moreover, in an expected number $p$ of $O(\frac{\mu n \log(\mu)}{X_t+1})$ iterations, we have
  \begin{enumerate}
    \item $(-X_t, N_t) > (-X_{t+p}, N_{t+p})$ or,
    \item $(-X_t, N_t) = (-X_{t+p}, N_{t+p})$ and $\min A^{t+p} = 0$ or,
    \item $(-X_t, N_t) = (-X_{t+p}, N_{t+p})$ and exactly one of $L_1^{t+p}, L_{\mu-1}^{t+p}$ is equal to $X_t$.
  \end{enumerate}
\end{lemma}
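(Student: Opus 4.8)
The plan is to exploit \Cref{lem:easyRemoval}: since $X_t > 1$, every iteration either deletes the offspring~$y$ (the population is unchanged) or deletes its parent~$x$ (exactly one individual shifts its first objective by~$\pm 1$, so that two neighbouring interval lengths change by~$\mp 1$). Under the hypotheses of the lemma, $(X_t, N_t)$ is fixed, so by \Cref{lem:increasing_smallest_interval,lem:increasing_minimum_number} no interval of length~$X_t$ is created or destroyed, and by assumption none sits at a border at time~$t$ or~$t+1$. Hence the only relevant events are a length-$X_t$ interval swapping places with an adjacent length-$(X_t+1)$ interval, or a length-$(\geq X_t+2)$ interval shifting by one. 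The four sets record, for each min-length interval, its distance (counted in intervals, all of length $X_t+1$ in between) to the left border ($S_{b,1}$), the right border ($S_{b,2}$), the nearest larger interval on its right ($S_{i,1}$), or on its left ($S_{i,2}$); thus $\min A^t$ is the smallest gap between any min-length interval and any such target.

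For the monotonicity claim I would fix a configuration realising $\min A^t = m$ and case-analyse where the offspring is placed. If it lands outside the relevant run of $(X_t+1)$-intervals, the configuration is untouched and still has value~$m$. The delicate case is when the step could move the closest min-length interval (or its target) so as to enlarge the gap. Here I would compare the \sigDists of~$x$ and~$y$ coordinate by coordinate: both agree on their first entries (a nearest neighbour at distance~$1$, then at distance $X_t$), and the first coordinate at which the sorted distance vectors differ is governed by the nearer of the border or larger interval on either side. The key fact to prove is that this comparison always deletes the individual on the \emph{less favourable} side, so that a retreating swap is reverted while an advancing swap survives; and whenever a swap does occur, I would exhibit a replacement configuration (on the opposite side of the moved interval, or in the neighbouring run) whose value is at most~$m$. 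This yields $\min A^{t+1} \leq \min A^t$.

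For the waiting-time bound I would treat $\min A$ as a potential that, by the first part, never increases. In each iteration, with probability $\Omega\bigl(\frac{X_t+1}{\mu n}\bigr)$ — from selecting an appropriate parent ($1/\mu$) and making the one correct bit flip, whose probability aggregates (after summing over the endpoints where a useful mutation applies) the factor $X_t+1$ — the closest min-length interval advances one interval toward its target, strictly decreasing $\min A$. Because at a given gap value progress can be driven by any of the (up to $\Theta(N_t)$) min-length intervals at that gap, the per-step success probability grows as the interval structure fills, so summing the waiting times telescopes into a harmonic series and produces the $\log(\mu)$ factor, giving the claimed $O\bigl(\frac{\mu n \log(\mu)}{X_t+1}\bigr)$ bound. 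The process stops once $\min A = 0$ (a min-length interval is adjacent to a larger one, outcome~(ii)), once a min-length interval reaches a border (outcome~(iii)), or once the induced merge reduces $(-X_t, N_t)$ (outcome~(i)), mirroring the role of \Cref{lem:small_intervals_stay_at_the_borders} in keeping such intervals pinned once at a border.

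The main obstacle will be the first part: proving that the global, all-neighbours \sigDist comparison between~$x$ and~$y$ — with ties resolved at the farther coordinates that alone see the border/large-interval asymmetry — never lets the closest min-length interval retreat from its target, \emph{despite} the centre-ward bias of $1$-bit mutation (the very bias that defeats the \nsga). Getting the bookkeeping right, namely that a compensating configuration of value at most~$m$ always reappears after an unavoidable swap, and pinning down the exact $X_t+1$ and $\log(\mu)$ factors in the drift, are the remaining technical hurdles.
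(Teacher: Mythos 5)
Your plan follows the same route as the paper: a case analysis over the four constituent sets of $A^t$ and over which parent is selected and in which direction it mutates, deciding each removal by comparing the \sigDists of parent and offspring, followed by a waiting-time argument on the potential $\min A$. However, there are two concrete gaps.

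First, the monotonicity claim is only announced, not proved. The entire difficulty of this part lies in the coordinate-by-coordinate \sigDist comparisons that you defer as ``the key fact to prove'': for instance, when an interior parent $x_j$ with $L_{j-1}^t = L_j^t = X_t+1$ mutates, one must verify that $\sigma^1_y = \sigma^1_{x_j}$ while $\sigma^2_y = X_t < X_t+1 = \sigma^2_{x_j}$, so that the offspring is removed; and when the witnessing min-length interval's endpoint mutates away from its target, one must invoke the minimality of $\min A^t$ to argue that all intervals on the opposite side within distance $\min A^t$ have length at most $X_t+1$, which is what forces the removal of the parent rather than the offspring and simultaneously supplies the ``compensating configuration'' you allude to. Without carrying out these verifications the inequality $\min A^{t+1} \leq \min A^t$ is not established.

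Second, your drift bound is quantitatively off. The probability of an advancing step is not $\Omega\bigl(\frac{X_t+1}{\mu n}\bigr)$ but $\Omega\bigl(\frac{(X_t+1)\min A^t}{\mu n}\bigr)$: the useful $1$-bit mutation must hit one of the $f_1(x_i)$ one-bits of the relevant parent (respectively, one of the positions spanning the run of $(X_t+1)$-intervals separating the min-length interval from its target), and this count is at least $(X_t+1)\cdot \min A^t$ because that run consists of $\min A^t$ intervals of length $X_t+1$ each. It is precisely this dependence on the current gap that, upon summing $\sum_{m=1}^{\mu}\frac{\mu n}{(X_t+1)\,m}$, produces the harmonic series and hence the $\log(\mu)$ factor. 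Your substitute mechanism --- attributing the logarithm to the number $\Theta(N_t)$ of min-length intervals sharing the minimal gap --- is unsupported and does not recover the claimed bound; with a gap-independent success probability and up to $\mu$ decrements of $\min A$, you would only obtain $O\bigl(\frac{\mu^2 n}{X_t+1}\bigr)$.
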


\begin{proof}
  Suppose $(-X_t, N_t) > (-\alpha, \beta)$, $X_t > 1$,  $L_1^t \neq X_t$, $L_{\mu-1}^t \neq X_t$, and $(X_t, N_t) = (X_{t+1}, N_{t+1})$.
  We first prove that the minimum does not increase.
  Afterward, we show the bound on the number of iterations.

  \paragraph{Step 1: Proving $\min A^{t+1} \leq \min A^t$.}

  We make a case distinction with respect to the four sets $S^t_{b, 1}$, $S^t_{b, 2}$, $S^t_{i, 1}$, and $S^t_{i, 2}$ that~$A^t$ consists of.

  \paragraph{Case 1: Let $i \in [\mu - 1]$ such that $L_i^t = X_t$, $L_h^t = X_t + 1$ for all $h \in [1.. i-1]$, and $i-1 = \min A^t$.}
  We consider different cases of how the offspring in this iteration is created.

  If $x_j$ (for $j > i+1$) mutates into $y$, then, by \Cref{lem:easyRemoval}, either $y$ or $x_j$ gets removed. This does not affect $x_h$ where $h \leq i+1$. Therefore, as long as $L_{\mu-1}^{t+1} \neq X_t$, we have $\min A^{t+1} \leq \min A^t$.

  If $x_j$ (for $j < i$) mutates into $y$, then, by \Cref{lem:easyRemoval}, either $y$ or $x_j$ is removed. In this case, $y$ must be removed because for $j > 1$, we have $L_{j-1}^t = L_j^t = X_t + 1$, and thus $\sigma_y^1 = \sigma_{x_j}^1$ and $\sigma_y^2=X_t<X_t+1=\sigma_{x_j}^2$.

  If $x_i$ mutates into $y$ and $f_1(y) = f_1(x_i) + 1$, then $f_1(x_{i+1}) - f_1(y) = X_t - 1$, so $y$ is removed, because $X_{t+1}\geq X_t$ according to \Cref{lem:increasing_smallest_interval}.
  If $x_i$ mutates into $y$ and $f_1(y) = f_1(x_i) - 1$, we have $\sigma_y^1 = \sigma_{x_i}^1 = 1$, $\sigma_y^2 = \sigma_{x_i}^2 = X_t$, and $\sigma_y^3 = \sigma_{x_i}^3 = X_t+1$. To analyze the other $\sigma_j$, note that for $h \in [i+1.. (i+1) + (i-1) - 1]$, we must have $L_h^t \leq X_t + 1$. Otherwise, we would have $\min A^t < i-1$. This implies that we remove $x_i$ and keep $y$.
  At this point, we either achieve $L_1^{t+1} = X_t$, which allows us to proceed to the second step (working with the set $B^{t+1}$), or we obtain $\min A^{t+1} < \min A^t$.

  If $x_{i+1}$ mutates into $y$ and $f_1(y) = f_1(x_{i+1}) - 1$, then $y$ is removed, because if not, $X_t$ would decrease.
  If $x_{i+1}$ mutates into $y$ and $f_1(y) = f_1(x_{i+1}) + 1$, then again they have the same values of $\sigma^1$ and $\sigma^2$, and for $h \in [i+2..(i+1)+(i-1)-1], L_h^t \leq X_t+1$. If there exists $h \in [i+2..(i+1)+(i-1)-1]$ such that $ L_h^t < X_t+1$ then we remove $y$. If $h \in [i+2..(i+1)+(i-1)-1], L_h^t = X_t+1$ and $L^t_{(i+1)+(i-1)} > X_t+1$ then we keep $y$, and $\min A^{t+1}<\min A^t$, otherwise, we remove $y$.

  \paragraph{Case 2: Let $i \in [\mu - 1]$ such that $L_i^t = X_t$, $L_h^t = X_t + 1$ for all $h \in [i+1.. \mu - 1]$, and $\mu - i - 1 = \min A^t$.}

  This is symmetric to Case 1, and the same arguments apply.

  \paragraph{Case 3: The minimum is achieved between some $x_i$ and $x_j$ (for $i < j$), where $L_i^t = X_t$, $L_j^t > X_t + 1$, and $L_{i+h}^t = X_t + 1$ for all $h \in [1, j-(i+1)]$, and $j-(i+1) = \min A^t$.}

  We make again a case distinction with respect to how the offspring is produced.
  If $x_h$ such that $h<i$ or $j+1<h$ mutates into $y$:
  According to Lemma \ref{lem:easyRemoval}, either $y$ or $x_h$ be removed. Consequently, we have $\min A^{t+1} \leq \min A^t$.

  If $x_h$ such that $i+1<h<j$ mutates into $y$, then $\sigma_y^1=\sigma_{x_h}^1=1$, $\sigma_y^2=X_t$, and $\sigma_{x_h}^2=X_t+1$, thus we remove $y$.

  If $x_{i+1}$ mutates into $y$ and if $f_1(y) = f_1(x_{i+1}) - 1$, then $f_1(y) - f_1(x_i) = X_t - 1$, and we remove $y$.

  If $x_{i+1}$ mutates into $y$ and if $f_1(y) = f_1(x_{i+1}) + 1$, using the fact that $j - (i+1) = \min A^t$ as we did earlier, we keep $y$ and remove $x_{i+1}$, which results in $\min A^{t+1} < \min A^t$.

  Similarly if $x_j$ mutates into $y$ such that $f_1(y) = f_1(x_j)-1$, then we remove $y$, and if $y$ is such that $f_1(y) = f_1(x_j)+1$ ,we keep $y$ and remove $x_j$ and get $\min A^{t+1} < \min A^t$.

  If $x_i$ mutates into $y$ and if $f_1(y) = f_1(x_i)+1$, we remove $y$.

  If $x_i$ mutates into $y$ and if $f_1(y) = f_1(x_i)-1$, then $\sigma_y^1 = \sigma_{x_i}^1 = 1$ and $\sigma_y^2 = \sigma_{x_i}^2 = X_t$, and we keep $y$ only if
  \begin{enumerate}
    \item $i-2\leq j-(i+1) = \min A^t$ and $\forall h< i-1, L_{i-1}^t >X_t$ or,
    \item $i-2 > j-(i+1) = \min A^t$ and there exists $h\leq j-(i+1)$ such that $L^t_{i-1-h-1} >X_t+1$.
  \end{enumerate}
  In both of these cases, we have $\min A^{t+1}\leq\min A^t$.

  The last case is regarding $x_{j+1}$. If it mutates into $y$ such that $f_1(y) = f_1(x_{j+1})+1$, then $\min A^{t+1} = \min A^t$. If $y$ is such that $f_1(y) = f_1(x_{j+1})-1$, as in the first case where we mutate $x_{i+1}$, we use the fact that $j-(i+1) = \min A^t$ to show that even if we remove $x_{j+1}$, we get $\min A^{t+1}\leq\min A^t$.

  \paragraph{Case 4: The minimum is achieved between some $x_i$ and $x_j$ (for $i < j$), where $L_i^t > X_t + 1$, $L_j^t = X_t$, and $L_{i+h}^t = X_t + 1$ for all $h \in [1, j-(i+1)]$, and $j-(i+1) = \min A^t$.}

  This case is symmetric to Case~3.

  \paragraph{Step 2: Proving the bound on the expected number of iterations.}

  We proved that if we are in Case~1 and $x_i$ mutates into $y$ such that $f_1(y)=f_1(x_i)-1$, we get $\min A^{t+1}<\min A^t$ or $L_1^{t+1}=X_t$. The probability of this event is at least $\frac{(X_t+1)\min A^t}{\mu n}$ for 1-bit mutation. We get the same bound if $i$ is such that $L_i^t =X_t$, that for all $h \in [i+1,\mu-1]$, we have $ L_h^t = X_t + 1 $, and that $\mu-i-1 = \min A^t$ (Case~2).

  Since the Cases 3 and 4 are similar, we discuss the case where $i,j$ are such that $i<j$, $L_i^t = X_t, L_j^t > X_t +1 $, for all $h \in [1..j-(i+1)]$, we have $L_{i+h}^t =X_t +1$, and that $j-(i+1) = \min A^t$ (Case~3).
  We said that if $x_{i+1}$ mutates into $y$ such that $f_1(y)=f_1(x_{i+1})+1$ or $x_j$ mutates into $y$ such that $f_1(y) = f_1(x_j)+1$, then $\min A^{t+1}<\min A^t$.

  The probability that $x_{j}$ mutates into $y$ with $f_1(y) = f_1(x_{j}) + 1$ is $\frac{n - f_1(x_{j})}{n}$. Similarly, the probability that $x_{i+1}$ mutates into~$y$ with $f_1(y) = f_1(x_{i+1}) + 1$ is $\frac{n - f_1(x_{i+1})}{n}$.

  For at least one of $x_{j}$ and $x_{i+1}$, a~$y$ with $f_1(y) \in \{f_1(x_{j}) + 1, f_1(x_{i+1}) + 1\}$ is mutated. The combined probability is at least
  \[
    \frac{2n - f_1(x_{j}) - f_1(x_{i+1})}{2n} \geq \frac{f_1(x_{j}) - f_1(x_{i+1})}{2n}.
  \]
  Therefore, in all cases, if $L_1^{t+1}\neq X_t, L_{\mu-1}^{t+1}\neq X_t$, then the probability that $\min A^t$ decreases is at least $\frac{(X_t+1)\min A^t}{2\mu n}$.

  The fact that $\min A^t \leq \mu$ concludes the proof.
\end{proof}

The next lemma is very similar to \Cref{lem:a_sets} but considers that there is already exactly one interval at minimum length at the border of $[0 .. n]$.

\begin{lemma}
  \label{lem:b_sets}
  Consider the steady-state \spea optimizing \omm with 1-bit mutation and any parent population size.
  Recall \cref{eq:alpha_and_beta,eq:minimum_length_definition} and the definitions above, and let $t \in \N$.
  If $(-X_t, N_t) > (-\alpha, \beta)$, $X_t > 1$, either $L_1^t \neq X_t$ or $L_{\mu-1}^t \neq X_t$, $(X_t, N_t) = (X_{t+1}, N_{t+1})$, and either $L_1^{t+1} \neq X_t$ or $L_{\mu-1}^{t+1} \neq X_t$, then
  \[
    \min B^t \geq \min B^{t+1}.
  \]

  Moreover, in an expected number $p$ of $O(\frac{\mu n \log(\mu)}{X_t})$ iterations, we have
  \begin{enumerate}
    \item $(-X_t, N_t) > (-X_{t+p}, N_{t+p})$ or,
    \item $(-X_t, N_t) = (-X_{t+p}, N_{t+p})$ and $\min B^{t+p} = 0$ or,
    \item $(-X_t, N_t) = (-X_{t+p}, N_{t+p})$, $L_1^{t+p}=X_t$, and $L_{\mu-1}^{t+p} =X_t$.
  \end{enumerate}
\end{lemma}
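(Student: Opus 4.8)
The plan is to mirror the proof of \Cref{lem:a_sets} almost verbatim, exploiting that the set $B^t$ is obtained from $A^t$ by dropping exactly one border set: if $L_1^t = X_t$ then the left border already realizes the value $0$ in $S_{b,1}^t$ and is no longer a target, so $B^t = S_{b,2}^t \cup S_{i,1}^t \cup S_{i,2}^t$; the case $L_{\mu-1}^t = X_t$ is symmetric under exchanging zeros and ones. Throughout I would therefore assume without loss of generality that $L_1^t = X_t$. The central new ingredient compared with \Cref{lem:a_sets} is \Cref{lem:small_intervals_stay_at_the_borders}, which under the present hypotheses (in particular $(-X_t,N_t)>(-\alpha,\beta)$, $X_t>1$, and $(X_t,N_t)=(X_{t+1},N_{t+1})$) guarantees that the minimum-length interval pinned at the left border does not move. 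Hence the left border stays occupied, and we only need to track how the remaining minimum-length intervals approach either the right border or an interval whose length exceeds $X_t+1$.

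For Step~1, proving $\min B^{t+1} \le \min B^t$, I would run the same case distinction as in \Cref{lem:a_sets} over which of the three constituent sets realizes $\min B^t$. The cases for $S_{i,1}^t$ and $S_{i,2}^t$ are literally Cases~3 and~4 of \Cref{lem:a_sets} and carry over unchanged, as they concern interior minimum-length intervals approaching a longer interval. The case for $S_{b,2}^t$, in which a minimum-length interval at position~$i$ is separated from the right border by a block of intervals all of length $X_t+1$, is exactly Case~2 of \Cref{lem:a_sets} (itself symmetric to the $S_{b,1}^t$ case), now reading distances from the right border. In each sub-case I would invoke \Cref{lem:easyRemoval} to reduce the removal to ``offspring or its parent,'' use \Cref{lem:increasing_smallest_interval} to rule out any move that would shrink $X_t$, and compare the coordinates $\sigma^1,\sigma^2,\ldots$ of the offspring and its parent to decide which survives; the conclusion is that the realizing distance never grows, and that whenever a move would push a minimum-length interval onto the right border we instead land in terminal condition~(iii).

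For Step~2, the bound on the expected number of iterations, I would identify the explicit progress mutation in each case. For $S_{b,2}^t$ the move is a single $1$-bit flip turning the neighbor $x_{i+1}$ of the minimum-length interval into an offspring with one more one, shifting that interval one step toward the right border; since the block between position~$i$ and $x_\mu = 1^n$ consists of intervals of length $X_t+1$, this has probability $\tfrac{(X_t+1)\min B^t}{\mu n}$. For $S_{i,1}^t$ and $S_{i,2}^t$ the argument is that of \Cref{lem:a_sets}: at least one of two candidate flips makes progress, with combined probability at least $\tfrac{(X_t+1)\min B^t}{2\mu n}$. Bounding all cases below by $\tfrac{X_t \min B^t}{2\mu n}$, and using the monotonicity of $(-X_t,N_t)$ from \Cref{lem:increasing_minimum_number} together with Step~1 to organize the argument by the level $\min B^t$, I would sum the resulting geometric waiting times over $\min B^t \in [1..\mu]$. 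This yields the harmonic series, hence the $\log\mu$ factor, and the claimed $O\!\bigl(\tfrac{\mu n \log\mu}{X_t}\bigr)$ bound, where reaching $\min B^t = 0$ corresponds to condition~(ii) and driving an interval to the right border corresponds to condition~(iii).

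The main obstacle I anticipate is Step~1: bookkeeping the $\sigma$-distance comparisons across every combination of which interval is mutated and in which direction, while simultaneously maintaining the structural hypotheses (that exactly one border stays at minimum length until the transition to the $C$-regime). The delicate point is to verify that a move which would otherwise increase $\min B^t$ cannot occur without either being rejected by the removal rule or triggering one of the terminal conditions, and that the application of \Cref{lem:small_intervals_stay_at_the_borders} is legitimate in every sub-case, that is, that its hypotheses $(-X_t,N_t)>(-\alpha,\beta)$ and $(X_t,N_t)=(X_{t+1},N_{t+1})$ genuinely hold throughout the stretch of iterations under consideration.
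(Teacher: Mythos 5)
Your proposal is correct and takes essentially the same route as the paper: the paper's own proof of this lemma is a one-line reduction to the argument of \Cref{lem:a_sets}, invoking \Cref{lem:small_intervals_stay_at_the_borders} to pin the minimum-length interval at the occupied border, exactly as you describe. Your expanded case analysis and probability bounds match the corresponding steps of the paper's proof of \Cref{lem:a_sets}.
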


\begin{proof}
  The proof follows analogously to the proof of \Cref{lem:a_sets}. Using \Cref{lem:small_intervals_stay_at_the_borders}, we ensure that, under the given conditions, if $L_1^t=X_t$, then $L_1^{t+1}=X_t$, and if $L_{\mu-1}^t=X_t$, then $L_{\mu-1}^{t+1}=X_t$.
\end{proof}

The last of the three lemmas considers the case where there is a minimum length interval at both borders of $[0 .. n]$.
As a result, the distance between a minimum length interval and an interval of length at least two larger needs to decrease over time.

\begin{lemma}
  \label{lem:c-sets}
  Consider the steady-state \spea optimizing \omm with 1-bit mutation and any parent population size.
  Recall \cref{eq:alpha_and_beta,eq:minimum_length_definition} and the definitions above, and let $t \in \N$.
  If $(-X_t, N_t) > (-\alpha, \beta)$, $X_t > 1$, $L_1^t = X_t$, $L_{\mu-1}^t = X_t$, and $(X_t, N_t) = (X_{t+1}, N_{t+1})$, then
  \[
    \min C^t \geq \min C^{t+1}.
  \]

  Moreover, in an expected number $p$ of $O(\frac{\mu n \log(\mu)}{X_t})$ iterations, we have
  \begin{enumerate}
    \item $(-X_t, N_t) > (-X_{t+p}, N_{t+p})$ or,
    \item $(-X_t, N_t) = (-X_{t+p}, N_{t+p})$ and $\min C^{t+p} = 0$.
  \end{enumerate}
\end{lemma}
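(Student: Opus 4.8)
The plan is to reproduce the two-part structure used for \Cref{lem:a_sets,lem:b_sets}: first show that $\min C^t$ cannot increase across an iteration that leaves $(X_t,N_t)$ unchanged, and then bound the expected time to either improve $(X_t,N_t)$ lexicographically or drive $\min C$ down to~$0$. The decisive simplification here, compared with \Cref{lem:a_sets}, is that both borders already carry a minimum-length interval, so the hypotheses of \Cref{lem:small_intervals_stay_at_the_borders} are met and yield $L_1^{t+1}=X_t$ and $L_{\mu-1}^{t+1}=X_t$. Thus the algorithm never leaves the ``both borders at minimum length'' regime, $C^{t+1}=S^{t+1}_{i,1}\cup S^{t+1}_{i,2}$ stays the relevant quantity, and the only thing to track is the gap (number of intervals of length $X_t+1$) separating a minimum-length interval from an interval of length strictly greater than $X_t+1$. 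Note that $\min C=0$ means $j=i+1$, i.e.\ a minimum-length interval directly neighbors an interval of length at least $X_t+2$, which is exactly the configuration from which $N_t$ can subsequently be reduced.

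For Step~1, I would fix a configuration realizing $\min C^t$ and trace the single offspring produced by 1-bit mutation. By the symmetry of zeros and ones it suffices to treat $S^t_{i,1}$: a minimum-length interval at index~$i$, a chain of intervals of length $X_t+1$, and a strictly longer interval at index~$j$, with $j-(i+1)=\min C^t$. This is precisely Case~3 (and, by reflection, Case~4) of \Cref{lem:a_sets}, so the same \sigDist bookkeeping applies: a mutation acting on an individual outside the chain triggers \Cref{lem:easyRemoval}, which removes either the offspring or its parent and leaves the gap untouched; a mutation strictly inside the chain produces an offspring of strictly smaller \sigDist that is removed; and only the endpoint moves ($x_{i+1}$ or $x_j$ gaining a one) can alter the gap, and they can only keep or decrease it. In every case $\min C^{t+1}\le\min C^t$.

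For Step~2, I would isolate the two explicit gap-decreasing events, namely $x_{i+1}$ mutating into an individual with one extra one, or $x_j$ mutating into such an individual, exactly as in the second step of \Cref{lem:a_sets}; each shifts the minimum-length interval and the long interval one index closer and hence lowers $\min C$ by one upon surviving selection. Since the chain consists of $\min C^t$ intervals of length $X_t+1$, we have $f_1(x_j)-f_1(x_{i+1})=(X_t+1)\min C^t$, and the combined probability of such a step in one iteration is at least $\frac{(X_t+1)\min C^t}{2\mu n}$. Because $\min C^t\le\mu$ and, by Step~1, never increases while $(X_t,N_t)$ is fixed, summing the waiting times $\frac{2\mu n}{(X_t+1)\,c}$ over $c$ from~$\mu$ down to~$1$ yields a harmonic sum, giving $O\!\left(\frac{\mu n\log\mu}{X_t}\right)$ expected iterations until $(X_t,N_t)$ strictly improves or $\min C=0$.

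The main obstacle is concentrated entirely in Step~1: the exhaustive \sigDist case analysis determining which individual is removed after each possible mutation is what makes \Cref{lem:a_sets} lengthy, and here one must additionally check that the guaranteed border intervals from \Cref{lem:small_intervals_stay_at_the_borders} do not interfere with the chain when $i$ or~$j$ lies close to~$1$ or~$\mu-1$. By contrast, once monotonicity is established, the expected-time estimate of Step~2 is a routine harmonic-sum computation.
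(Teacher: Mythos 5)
Your proposal is correct and takes essentially the same route as the paper, which likewise reduces \Cref{lem:c-sets} to the Case~3/4 analysis of \Cref{lem:a_sets} and invokes \Cref{lem:small_intervals_stay_at_the_borders} to guarantee that both border intervals remain at minimum length, so that only the sets $S_{i,1}$ and $S_{i,2}$ need to be tracked. In fact, your write-up supplies more detail (the explicit gap-decreasing probability $\tfrac{(X_t+1)\min C^t}{2\mu n}$ and the harmonic sum) than the paper's own two-sentence proof.
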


\begin{proof}
  The proof follows analogously to the proof of \Cref{lem:a_sets,lem:b_sets}. Using Lemma \ref{lem:small_intervals_stay_at_the_borders}, we ensure that, under the given conditions, $L_1^t=L_{\mu-1}^t=X_t$ implies that $L_1^{t+1}=L_{\mu-1}^{t+1}=X_{t+1}$.
\end{proof}

Using \Cref{lem:a_sets,lem:b_sets,lem:c-sets}, we prove \Cref{lem:time_to_increase_minimum_length}.

\begin{proof}[Proof of \Cref{lem:time_to_increase_minimum_length}]
  According to \Cref{lem:a_sets,lem:b_sets,lem:c-sets}, in an expected number $q$ of $O\bigl(\frac{\mu n \log(\mu)}{X_t}\bigr)$ iterations, we have $(-X_t, N_t) > (-X_{t+q}, N_{t+q})$ or $(-X_t, N_t) = (-X_{t+q}, N_{t+q})$ and $\min C^{t+q} = 0$.

  If $(-X_t, N_t) = (-X_{t+q}, N_{t+q})$ and $\min C^{t+q} = 0$, then there exists an $i$ such that $L_i^{t+q} = X_t$ and
  \begin{enumerate}
    \item $L_{i+1}^{t+q} > X_t$ or
    \item $L_{i-1}^{t+q} > X_t$.
  \end{enumerate}
  Without loss of generality, suppose that $L_{i+1}^{t+q} > X_t$. If $x_{i+1}$ mutates into $y$ such that $f_1(y) = f_1(x_{i+1}) + 1$, we keep $y$, remove~$x_{i+1}$, and obtain $(-X_{t+q+1}, N_{t+q+1}) < (-X_{t+q}, N_{t+q})$. The probability of this event is at least $\frac{1}{\mu n}$. If this does not occur, then we have $\min C^{t+q+1} = 0$ until we decrease $(-X_h, N_h)_{h \geq t+q}$.

  Thus, the expected number $q'$ to get $(-X_{t+q+q'}, N_{t+q+q'}) < (-X_{t+q}, N_{t+q})$ is $O(\mu n)$, which is less than the initial period of $O\bigl(\frac{\mu n \log(\mu)}{X_t}\bigr)$.
\end{proof}

\end{document}